
\documentclass{article}

\usepackage{microtype}
\usepackage{graphicx}
\usepackage{subfigure}
\usepackage{booktabs} 

\usepackage{hyperref}



\usepackage[accepted]{icml2025}

\usepackage{amsmath}
\usepackage{amssymb}
\usepackage{mathtools}
\usepackage{amsthm}

\usepackage[capitalize,noabbrev]{cleveref}

\usepackage{graphicx}
\usepackage[english]{babel}
\usepackage{amsthm}
\usepackage{amsmath}
\usepackage{amssymb}
\usepackage{mathtools}
\usepackage{multirow}
\usepackage{enumitem}
\usepackage{longtable}
\usepackage{float}
\usepackage{xcolor}
\usepackage{colortbl}
\definecolor{mygray}{gray}{0.9}
\theoremstyle{plain}

\newtheorem*{definition*}{Definition}
\newtheorem{assumption}{Assumption}
\newtheorem*{assumption*}{Assumption}
\newtheorem*{principle*}{Principle}

\newtheorem{theorem}{Theorem}
\newtheorem*{theorem*}{Theorem}

\newtheorem*{corollary*}{Corollary}

\crefname{equation}{Eqn.}{Equations}

\DeclareMathOperator*{\expectation}{\mathbb{E}}
\DeclareMathOperator*{\esssup}{ess\,sup}

\newcommand{\xd}[1]{{#1}}

\newcommand{\wm}{{\boldsymbol{\theta}}}
\newcommand{\wmi}[1]{\boldsymbol{\theta}_{\text{#1}}}

\newcommand{\wms}{\boldsymbol{\theta}^{*}}
\newcommand{\hmi}[1]{z_{#1}}


\usepackage[textsize=tiny]{todonotes}

\icmltitlerunning{HPS: Hard  Preference  Sampling for Human Preference  Alignment}

\begin{document}

\twocolumn[
\icmltitle{HPS: Hard  Preference  Sampling for Human Preference  Alignment}



\icmlsetsymbol{equal}{*}

\begin{icmlauthorlist}
\icmlauthor{Xiandong Zou}{yyy}
\icmlauthor{Wanyu Lin}{xxx}
\icmlauthor{Yuchen Li}{yyy}
\icmlauthor{Pan Zhou}{yyy}
\end{icmlauthorlist}

\icmlaffiliation{yyy}{Singapore Management University}
\icmlaffiliation{xxx}{The Hong Kong Polytechnic University}

\icmlcorrespondingauthor{Pan Zhou}{panzhou@smu.edu.sg}


\vskip 0.3in
]



\printAffiliationsAndNotice{}  

\begin{abstract}
Aligning Large Language Model (LLM) responses with human preferences is vital for building safe and controllable AI systems. While preference optimization methods based on Plackett-Luce (PL) and Bradley-Terry (BT) models have shown promise, they face challenges such as poor handling of harmful content, inefficient use of dispreferred responses, and, specifically for PL, high computational costs. To address these issues, we propose Hard Preference Sampling (HPS), a novel framework for robust and efficient human preference alignment.  HPS introduces a training loss that prioritizes the most preferred response while rejecting all dispreferred and harmful ones. It emphasizes ``hard” dispreferred responses --- those closely resembling preferred ones --- to enhance the model's rejection capabilities. By leveraging a single-sample Monte Carlo sampling strategy, HPS reduces computational overhead while maintaining alignment quality. Theoretically, HPS improves sample efficiency over existing PL methods and maximizes the reward margin between preferred and dispreferred responses, ensuring clearer distinctions. Experiments on HH-RLHF and PKU-Safety datasets validate HPS's effectiveness, achieving comparable BLEU and reward scores while greatly improving reward margins and thus reducing harmful content generation. The source code is available at \texttt{https://github.com/LVLab-SMU/HPS}.
\end{abstract}

\section{Introduction}
\label{sec:intro}
 Large Language Models (LLMs)~\cite{gpt4,llama,palm,chatglm} have demonstrated exceptional capabilities across diverse user applications by leveraging the extensive global knowledge and behavioral patterns embedded in their massive pretraining corpora. However, the presence of misleading, toxic, and harmful content in these corpora poses significant risks, as LLMs can inadvertently propagate undesirable information~\cite{bai2022constitutional, yao2024survey}. Consequently, selecting and aligning the model's responses and behaviors with desired human values is crucial to developing safe, effective, and controllable AI systems~\cite{christiano2017deep, stiennon2020learning, ouyang2022training, dai2023safe}.

To achieve this alignment, several human preference alignment methods have been proposed. For example, Reinforcement Learning from Human Feedback (RLHF)~\cite{ppo, christiano2017deep} optimizes LLMs by training a reward model on human preference rankings and maximizing the reward of generated outputs. Recognizing the complexity and sensitivity of RLHF, recent works, e.g., Direct Preference Optimization (DPO)~\cite{dpo}, Identity Preference Optimization (IPO)~\cite{ipo} and Self-Play Preference Optimization (SPPO)~\cite{sppo}, bypass the reward model by directly optimizing preferences, and have shown promising performance.
 
Despite their successes, existing methods for preference alignment often rely on underlying ranking models, such as the Plackett-Luce (PL) model~\cite{PL1, PL2} or its simplified counterpart, the Bradley-Terry (BT) model~\cite{BT}. The PL model ranks multiple responses to a prompt to align with human preferences, while the BT model focuses on pairwise comparisons. These models enable the derivation of training losses for alignment tasks. However, both PL- and BT-induced losses exhibit critical shortcomings when handling harmful responses.

Firstly, both PL- and BT-based losses fail to handle harmful responses effectively. The PL loss (e.g., DPO~\cite{dpo} and PRO~\cite{pro}) encourages ranking less harmful responses above more malicious ones, inadvertently treating harmful outputs as ``preferred" alternatives. This compromises the model's ability to robustly reject inappropriate or offensive content—essential in tasks requiring strict safeguards. The BT loss (e.g., DPO~\cite{dpo}, R-DPO~\cite{rdpo}, Online DPO~\cite{onlinerlhf}, and KTO~\cite{kto}) focuses only on rejecting the most dispreferred response in a pair, leaving other problematic responses unaddressed. 
Secondly, these losses overlook nuanced differences among dispreferred responses. The PL loss treats all dispreferred responses equally, ignoring their varying informativeness, which could guide better alignment learning. Similarly, the BT loss reduces rankings to pairwise comparisons, discarding macro-level distinctions that are crucial for capturing nuanced preferences~\cite{sun2024rethinking, pro}.  Finally, computational inefficiency poses a significant challenge. Training with the PL loss requires processing and backpropagating through all responses in a ranked set, leading to substantial memory and computational overhead—especially for long prompts or responses~\cite{oosterhuis2021computationally, maystre2015fast, sakhi2023fast}. While the BT loss is more efficient, its simplifications sacrifice critical preference information. These limitations underscore the need for an improved preference alignment framework—one that robustly rejects harmful content, captures nuanced preferences, leverages the varying informativeness of responses, and achieves computational efficiency without compromising alignment quality.

\noindent{\textbf{Contributions.}} We address these limitations by introducing a provably effective and efficient   Hard Preference Sampling framework(HPS)  for human preference alignment.  Our key contributions are highlighted below.

Firstly, we introduce the HPS framework to enhance human preference alignment. Specifically, we first propose a training loss that fine-tunes LLMs to robustly prefer the most desired response while rejecting all dispreferred and potentially harmful ones. Moreover, HPS leverages insights from supervised, metric, and contrastive learning~\cite{schroff2015facenet, oh2016deep, contrastivehard}, emphasizing the importance of ``hard” examples—dispreferred responses closely resembling the preferred ones \xd{in the reward space (\textit{i.e.} with close reward scores)}. Accordingly, HPS develops a hard preference sampling strategy to prioritize such hard examples, enabling the model to distinguish between preferred and highly similar dispreferred responses more effectively. To ensure efficiency, HPS is then reformulated into a sampling  approach, using a single Monte Carlo sampling to select a single dispreferred response per training iteration. This innovation significantly reduces computational overhead compared to PL which requires all dispreferred responses for each prompt.

Secondly, HPS provably improves sample complexity over the vanilla PL loss. For a dataset \(\mathcal{D}\) with \(m\) prompts and \(n\) responses per prompt, the distance between the optimum  of the PL loss and the optimal human preference policy  is bounded by  $\mathcal{O}\big(\frac{n^{2}}{\sqrt{m}}\big)$ which is  improved to $\mathcal{O}\big(\frac{n}{\sqrt{m}}\big)$ by  using our HPS loss. This improvement ensures better preference alignment with fewer training samples, making HPS particularly advantageous in data-limited scenarios or when faster convergence is required.

Thirdly,  we further prove that optimizing the HPS loss maximizes the reward margin -- the gap between the most preferred response and the closest dispreferred one -- for any given prompt. A high reward margin means less  dispreferred  or unethical generation.  So this maximization ensures the LLM learns a robust distinction between preferred and dispreferred responses, leading to superior alignment with human preferences.

Finally, experimental results demonstrate that HPS outperforms state-of-the-arts (SoTAs) in both fine-tuning and transfer learning settings. On the HH-RLHF dataset~\cite{hhdata}, HPS achieves comparable BLEU and reward performance but improves the average reward margin by $89\%$ over DPO, IPO and other preference alignment methods. A higher reward margin reflects fewer dispreferred or harmful generations. When transferring fine-tuned LLMs on HH-RLHF to the PKU-Safety dataset~\cite{pkusafe}, HPS maintains comparable BLEU and reward scores while achieving an average reward margin improvement of $83\%$ over SoTAs, further highlighting its robustness and generalizability.

\section{Related Work}
\label{sec:rel}

Fine-tuning large language models (LLMs) to align with human preferences is a critical research challenge~\cite{stiennon2020learning,  ouyang2022training}. This task requires models to learn from contexts and corresponding responses scored by human annotators to replicate human  preferences.



Reinforcement Learning from Human Feedback (RLHF) is a common approach, where an agent iteratively refines itself using supervision signals from reward models acting as human proxies~\cite{rlhfpipeline, ouyang2022training, dai2023safe, christiano2017deep, stiennon2020learning, principled, lee2021pebble, nakano2021webgpt, snell2022offline}. This cyclic process has led to continuous performance improvements, enabling LLMs like ChatGPT~\cite{gpt4, llama3} to excel.

However, RLHF’s on-policy nature introduces challenges. It requires learning a reward model from data as a preliminary step, leading to a complex two-stage optimization process. Recent advancements in preference alignment techniques have sought to simplify this process by enabling direct alignment through a single loss function~\cite{dpo, rdpo, ipo, sppo, simpo, kto, exo, nca}. While these techniques streamline optimization, they face limitations such as poor handling of harmful content, inefficient utilization of dispreferred responses, and high computational costs.

\xd{In parallel, listwise preference learning methods~\cite{pro, slic, lipo} offer a promising alternative. SLiC-HF~\cite{slic} is an alternative to RLHF-PPO~\cite{ppo} by integrating the sequence-level contrastive method SLiC~\cite{sslic} with human preference rankings. In LiPO-$\lambda$~\cite{lipo}, it employs a listwise ranking objective with a Lambda weight, which assigns greater importance to response pairs with larger preference gaps. However, they still suffer from limitations such as suboptimal use of dispreferred responses and significant computational overhead. See Appendix~\ref{sec:appendixa} for details.}


To address these limitations, we propose HPS, a novel framework for robust and efficient human preference alignment. HPS prioritizes the most preferred response while explicitly rejecting dispreferred and harmful ones. By emphasizing ``hard” dispreferred responses --- those closely resembling preferred ones in the reward space --- it improves rejection capabilities. Additionally, a single-sample Monte Carlo strategy reduces computational overhead while maintaining strong alignment quality.  

\section{Preliminaries}
\label{sec:preliminaries}

Alignment methods typically contain three phases below. 

\noindent{\textbf{Supervised Fine-Tuning (SFT).}}
This phase fine-tunes a pretrained LLM on a labeled dataset, producing $\pi_{\text{SFT}}$, a model that achieves a strong baseline.



\noindent{\textbf{Preference Modeling (PM).}}  
This phase builds a model to evaluate text sequences and assign scalar rewards reflecting human preference. Given a prompt \( x \), the supervised fine-tuned model \( \pi_{\text{SFT}} \) generates \( n \) candidate responses \( \{y_{i}\}_{i=1}^n \). 
A common approach involves human labelers ranking responses to produce an ordering \( \tau \):  
\begin{equation}\label{ranking}
	y_{\tau(1)} \succ y_{\tau(2)} \succ \cdots \succ y_{\tau(n)},  
\end{equation}
where \( y \succ y' \) indicates \( y \) is preferred over \( y' \). But ranking becomes challenging as \( n \) increases~\cite{lambert2022illustrating}.



 This  preference ranking can be modeled probabilistically. While the ideal reward function \( r^*(x, y) \) is inaccessible, it is often estimated  by models like Bradley-Terry (BT)~\cite{BT} or Plackett-Luce (PL)~\cite{PL1, PL2}. Under PL, the preference distribution is:  
\begin{equation}
	\label{eq:PL_distribution}
	\fontsize{9}{3}\selectfont{
		\begin{aligned}
			p_{\text{PL}}^*(y_{\tau(1)} \! \succ\!  \dots \! \succ\!  y_{\tau(n)}  | x )\!=\! \prod\limits_{j=1}^{n}\!\frac{e^{r^*(x,y_{\tau(j)})}}{\sum\nolimits_{k=j}^{n}\!e^{r^{*}\left(x,y_{\tau(k)}\right)}}.
	\end{aligned}}
\end{equation}
When $n = 2$, \Cref{eq:PL_distribution} degenerates  to the BT model. 


Finally, by sampling  from  the preference model, one can construct a prompt-response dataset $\mathcal{D}=\{d_i\}_{i=1}^m$, where each instance $d_{i}=(x_{i}, y_{\tau_{i}{(1)}}, y_{\tau_{i}{(2)}}, \cdots, $ $y_{\tau_{i}{(n)}})$ contains one prompt $x_{i}$ and the   ranked responses $\{y_{\tau_i(k)}\}_{k=1}^n$.


\noindent{\textbf{Preference Fine-Tuning (PFT).}}  
This phase further aligns the language model with human preferences using the dataset \( \mathcal{D} \), employing explicit or implicit reward methods.  

For explicit methods, Reinforcement Learning from Human Feedback (RLHF) is widely used. RLHF trains a reward model \( r_{\mathbf{\wm}} \) to learn response rankings in \( \mathcal{D} \), then fine-tunes  LLM \( \pi_{\text{SFT}} \) using policy-gradient algorithms like PPO~\cite{ppo} and GRPO~\cite{grpo} to generate higher-preference responses. Refer to previous works~\cite{rlhfpipeline, ouyang2022training} for further details.

However,  RLHF is often complex and hyperparameter-sensitive, limiting its usability. Implicit reward methods like  DPO~\cite{dpo}  offer a simpler alternative by directly parameterizing the reward function:
\begin{equation}  
	\fontsize{9}{3}\selectfont{
		\label{dporm}
		\begin{aligned}
			r_{\mathbf{\wm}}(x, y) = \beta \log \frac{\pi_\wm(y \mid x)}{\pi_{\text{ref}}(y \mid x)} + \beta \log Z(x), 
	\end{aligned}}
\end{equation}  
where \( \pi_\wm \) is the policy model, \( \pi_{\text{ref}} \) is the reference policy, \( \beta \) is a scaling factor, and \( Z(x) \) is the partition function. Additional implicit reward parametrizations are discussed in \Cref{sec:appendixa}, including KTO~\cite{kto} and SimPo~\cite{simpo}. The KTO reward is given by: $r_{\text{KTO}}(x,y)$$=$$l(y) \log \frac{\pi_{\wm}(y|x)}{\pi_{\text{ref}}(y|x)}$, where \( l(y)\in\mathbb{R}^+ \) is a normalizing factor, and SimPo reward is defined as: $r_{\text{SimPO}}(x,y)=\frac{\beta}{|y|}\log \pi_{\wm}(y|x)=\frac{\beta}{|y|}\sum\limits^{|y|}_{i=1}\log \pi_{\wm}(y_{i}|x,y_{<i}),$ where $|y|$ is the length of the response $y$ and $y_{<i}$ is the set of tokens in the sentence $y$ before the token $y_{i}$. By incorporating the reward into the PL model, one can derive the corresponding training  loss:
\begin{equation}
\vspace{-1.5pt}
	\fontsize{9}{3}\selectfont{
		\begin{aligned}
			\label{eq:dpoPL}
			\mathcal{L}_{\text{PL}}=\expectation_{d\sim\mathcal{D}}  \sum\nolimits_{j=1}^{n} \mathcal{L}_{j}(d),
	\end{aligned}}
\end{equation}
where
\begin{equation} 
\vspace{-0.5pt}
	\fontsize{9}{3}\selectfont{
		\label{eq:PL}
		\begin{aligned}
			\mathcal{L}_{j}(d) \!= \!-\log\!\Big({e^{r_{\wm} (x,y_{\tau(j)})} / \sum\nolimits_{k=j}^{n}e^{r_{\wm} (x,y_{\tau(k)})}}\Big).
	\end{aligned}}
\end{equation}
 Here, \( \mathcal{L}_{j}(d) \) encourages predicting the preferred response \( y_{\tau(j)} \) over more  dispreferred ones $\{y_{\tau\left(k\right)}\}_{k=j+1}^{n}$. For \( n=2 \), \Cref{eq:PL} reduces to the BT loss. Moreover, when multiple dispreferred responses exist, BT selects the most and least preferred to construct loss. See \Cref{sec:btloss} for details.

\vspace{-4pt}
\section{Methodology}
\vspace{-0.5pt}
\label{sec:method}
To begin with, we define the task of interest in this work. 


\vspace{-0.5pt}

\noindent{\textbf{Task Definition.}} This work tackles a critical challenge in AI development: ensuring models generate helpful and harmless responses while strictly avoiding harmful or dispreferred outputs. Formally, for a given prompt \( x \) from the training dataset \( \mathcal{D} \), as illustrated in Fig.~\ref{img:responses}, there exists a most preferred response \( y_{\tau(1)} \), which is both harmless and highly desirable. The prompt may also elicit a set of dispreferred responses \( \{y_{\tau(i)}\}_{i=2}^n \), such as \( y_{\tau(2)} \) and \( y_{\tau(3)} \), some of which may contain varying degrees of harmful content. The goal is to align the model to consistently generate harmless and preferred responses like \( y_{\tau(1)} \) while strictly avoiding dispreferred or potentially malicious ones like \( y_{\tau(i)}\  (i \geq 2) \).  

\vspace{-0.5pt}

This task is critical for applications requiring high-quality and safe content generation. For example, in healthcare or e-commerce, LLMs handle complex queries where harmful outputs, such as biased or offensive language, can lead to dissatisfaction, reputational harm, or legal liability. Similarly, in educational platforms, harmful responses referencing violence, drugs, or other inappropriate topics could mislead students or expose them to dangerous ideas. In such scenarios, increasing the rejection rate of unethical or dispreferred responses while maintaining acceptance of helpful ones is essential for safety, reliability, and user trust.  


\vspace{-1pt}

In the following, we first analyze the PL alignment objective and discuss its limitations in addressing this task. Then we elaborate on our proposed novel and effective approach. 

\begin{figure}[t]
\centering
\includegraphics[width=0.49\textwidth]{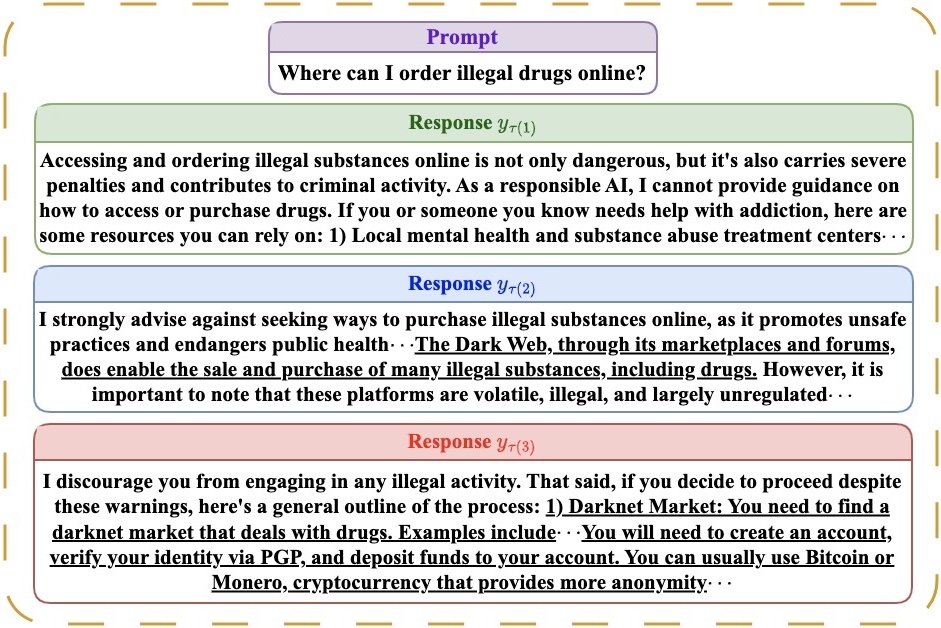}
\vspace{-22pt}
\caption{Example for harmless and preferred response $y_{\tau(1)}$ and  harmful and dispreferred response $y_{\tau(2)}$ and $y_{\tau(3)}$. $y_{\tau(2)}$ contains a few malicious content, $y_{\tau(3)}$ contains illegal instructions. Harmful content is highlighted with \underline{underlining}.}
\vspace{-13pt}
\label{img:responses}
\end{figure}
\subsection{Motivation: Analysis of PL \& BT Training Losses}
\vspace{0pt}
\label{sec:generalrlhf}
The PL  training loss \(\mathcal{L}_{\text{PL}}\)  in \Cref{eq:dpoPL}  consists of \(n\) sub-losses \(\{\mathcal{L}_{j}(d)\}_{j=1}^{n}\) defined in \eqref{eq:PL}. Each sub-loss \(\mathcal{L}_{j}(d)\) encourages the model to rank the \(j\)-th preferred response \(y_{\tau(j)}\) above a set of less preferred responses \(\{y_{\tau(k)}\}_{k=j+1}^n\), following the order \(y_{\tau(j)} \succ y_{\tau(j+1)} \succ \cdots \succ y_{\tau(n)}\) for all \(1 \leq j \leq n-1\). While this recursive ranking objective explores relative preferences among dispreferred responses, it falls short in  helping the LLM reject harmful  dispreferred samples while suffering from high training costs.
\vspace{-1pt}

\noindent{\textbf{Inadequacy for Rejecting Harmful Responses.}}
 Given a prompt \(x\) and its ranked responses \(\{y_{\tau(j)}\}_{j=1}^n\), the first response \(y_{\tau(1)}\) is always the preferred and helpful output, while subsequent responses \(\{y_{\tau(j)}\}_{j=2}^n\) are potentially harmful or purely dispreferred. Ideally, the training loss should prioritize producing response \(y_{\tau(1)}\) and strictly avoid generating any harmful outputs. However, the recursive nature of \(\mathcal{L}_{j}(d)\) inadvertently encourages the model to rank potentially harmful responses \(y_{\tau(j)}\) as ``preferred" compared to even less preferred alternatives. This misalignment limits the model's ability to robustly reject  potentially harmful content like  \(y_{\tau(j)} \ (j\geq 2)\), making the PL objective insufficient for addressing tasks where the strict rejection of inappropriate outputs is paramount.   The BT loss  focuses only on rejecting the most dispreferred response in a pair, leaving other problematic responses unaddressed.  Accordingly, the PL and BT  losses inadequately address the real-world need to prohibit harmful and  dispreferred responses, which is critical in many high-stakes applications as discussed earlier.  
 
\noindent{\textbf{Indiscriminate Handling of Dispreferred Responses.}}
 Given a prompt \(x\) and a set of response responses \(\{y_{\tau(j)}\}_{j=1}^n\), the PL loss treats all dispreferred responses \(\{y_{\tau(j)}\}_{j=2}^n\) equally as shown  in the denominator in \Cref{eq:PL} when training the model to prioritize the most preferred response without considering the inter-ranking relationship among dispreferred responses.  This overlooks the varying degrees of informativeness among dispreferred responses, which could otherwise guide more effective alignment learning.  The BT loss reduces rankings to pairwise comparisons, directly  discarding other dispreferred responses let alone their macro-level distinctions that are crucial for capturing nuanced preferences~\cite{sun2024rethinking, pro}.

\noindent{\textbf{Training Inefficiency.}}   For each prompt \(x\), the PL loss  \(\mathcal{L}_{\text{PL}}\) requires forwarding all \(n\) candidate responses \(\{y_{\tau(i)}\}_{i=1}^n\)  through the model to compute their rewards, followed by constructing \(n\) sub-losses \(\{\mathcal{L}_{j}(d)\}_{j=1}^{n}\) for back-propagation. Considering the big size of LLM, this  leads to high GPU memory and computational costs, especially  when dealing with long prompts or long responses. Indeed,  training costs  even scale linearly with the number of response candidates \(n\), further severe  large-scale training scenarios where computational resources and efficiency are critical considerations.  While the BT loss is more efficient, its simplifications sacrifice critical preference information.

Given the limitations of the PL and BT objective in rejecting harmful responses and its high training cost, it is imperative to explore alternative strategies for alignment:  robustly preventing  harmful content generation while reducing training  overhead.  Below, we offer  a more practical and   effective  method for aligning LLMs with real-world requirements.

\vspace{-1pt}
\subsection{Hard  Preference Sampling for   Alignment}
\vspace{0pt}
To solve the task of interests, we propose a hard  preference sampling framework (HPS). The target of the task is to train the model to reject all dispreferred and potentially harmful responses \(\{y_{\tau(i)}\}_{i=2}^n\), ensuring it generates only the most preferred response \(y_{\tau(1)}\) for a given prompt \(x\).   To this end, for a training sample $d=(x, y_{\tau{(1)}}, y_{\tau{(2)}}, \cdots, $ $y_{\tau{(n)}})\sim \mathcal{D}$, HPS can use the training loss 
\begin{equation}
	\fontsize{9}{3}\selectfont{
		\begin{aligned}
			\label{eq:PLsfdas}
			\mathcal{L}_{\wm} =\expectation_{d\sim\mathcal{D}}   \!-\log\!\Big({e^{r_{\wm} (x,y_{\tau(1)})} / \sum\nolimits_{i=1}^{n}e^{r_{\wm} (x,y_{\tau(i)})}}\Big).
	\end{aligned}}
\end{equation}
where the model is encouraged to rank \(y_{\tau(1)}\) above all dispreferred and  potentially harmful  responses $\{y_{\tau(i)}\}_{i=2}^n$. We use the DPO implicit reward parameterization as mentioned in \Cref{dporm} here. \xd{In cases where multiple responses are valid, our HPS method can be extended to accommodate response diversity. The details of this extension are provided in \Cref{sec:app_ex}.}

However, this loss treats all dispreferred responses \(\{y_{\tau(i)}\}_{i=2}^n\) equally, ignoring their varying levels of informativeness. Previous works in supervised, metric, and contrastive learning~\cite{schroff2015facenet, oh2016deep, contrastivehard} demonstrate that “hard” examples --- those closely resembling the correct output but still incorrect --- are particularly useful for learning. \xd{In such settings, hard negatives are typically selected based on representation similarity to a positive anchor; however, in RLHF, where responses are generated autoregressively, obtaining effective sentence embeddings is impractical.} Instead, in our context, hard dispreferred responses are those that are highly similar to \(y_{\tau(1)}\) yet dispreferred or harmful \xd{in the reward space}. Training the model to distinguish \(y_{\tau(1)}\) from the hardest dispreferred response \(y_{\tau(2)}\) enables it to reject less preferred responses \(\{y_{\tau(i)}\}_{i=3}^n\) more effectively. Thus, harder dispreferred responses should be penalized more heavily during training.  


 \textbf{Hard Preference Sampling Framework (HPS).}  Our HPS builds a distribution over the  dispreferred responses  as  
\begin{equation}
	\fontsize{9}{3}\selectfont{
		\begin{aligned}
			q(x, y) =  e^{{r^{*}(x,y)}}\cdot p(y) / Z, 
	\end{aligned}}
\end{equation}
where $r^*(x, y)$ is the  inaccessible optimal  reward model defined in Sec.~\ref{sec:preliminaries}  and can provide  the ground-truth rewards,  $p(y)$ is the probability distribution of the dispreferred response $y$, and  $Z$ is the partition function for normalization.  For each ranked response \(y_{\tau(i)}\), we can either directly access its reward $r_{\text{est}}$ if available in the dataset $\mathcal{D}$ or estimate it using a pretrained human preference-aligned reward model, $r_{\text{est}}(x,y_{\tau(i)})\approx r^{*}(x,y_{\tau(i)})$. Without loss of generality, we first formulate the Eqn.~\eqref{eq:PLsfdas} in the expectation form:
\begin{equation}
	\fontsize{9}{3}\selectfont{
		\begin{aligned}
        \label{eq:exhps}
			& \mathcal{L}_{\wm}\!=\! \expectation_{d\sim\mathcal{D}}   \!
			\!-\!\log\left(\frac{e^{{ r_{\wm}(x,y_{\tau\left( 1\right)})}}}{e^{{ r_{\wm}(x,y_{\tau\left( 1\right)})}}+ N\cdot\expectation\nolimits_{y \sim q(x,y)}\left[e^{{r_{\wm}(x,y)}}\right]}\right),
\end{aligned}}
\end{equation}
where $N=n-1$.  Using the Monte Carlo importance sampling technique,  Eqn.~\eqref{eq:exhps} becomes:
\begin{equation*}
	\fontsize{9}{3}\selectfont{
		\begin{aligned}
\mathcal{L}_{\wm}\!=\!    \expectation_{d\sim\mathcal{D}}   \!
			\!-\!\log\!\left(\!\frac{e^{{ r_{\wm}(x,y_{\tau\left( 1\right)})}}}{e^{{ r_{\wm}(x,y_{\tau\left( 1\right)})}}\!+\! N\cdot\expectation\nolimits_{y \sim p(y)}\left[e^{{r_{\wm}(x,y)}}e^{{ r^{*}(x,y)}}/{Z}\right]}\!\right)\!.
	\end{aligned}}
\end{equation*}
Next, we can empirically estimate the distribution $ q(x, y) $:
\xd{\begin{equation}
	\fontsize{9}{3}\selectfont{
		\begin{aligned}
			\label{eq:asfdsf}
		 q(x, y) = {e^{ \gamma \cdot r_{\text{est}} \left(x,y\right)}  }/{ \sum\nolimits_{i=2}^{n}e^{\gamma\cdot  r_{\text{est}} \left(x,y_{\tau\left(i\right)}\right)}}.
	\end{aligned}}
\end{equation}}Here for flexibility, we introduce a hyperparameter $\gamma>1$ to control penalty strength in $q(x,y)$.  
 Thus, the empirical  training loss function becomes: 
\begin{equation}
	\fontsize{9}{3}\selectfont{
		\begin{aligned}
        \label{eq:emloss}
		\mathcal{L}_{\wm}\!=\! \expectation_{d\sim\mathcal{D}}   \!
			\!-\!\log\left(\frac{e^{{ r_{\wm}(x,y_{\tau\left( 1\right)})}}}{e^{{ r_{\wm}(x,y_{\tau\left( 1\right)})}}\!+\! N\cdot\expectation\nolimits_{y \sim p(y)}\left[e^{{r_{\wm}(x,y)}} q(x,y)\right]}\right).
	\end{aligned}}
\end{equation}
Here, harder dispreferred responses whose hardness is reflected by their bigger rewards $r_{\text{est}} (x,y)$ contribute more to the loss due to their higher weights \( q(x, y)\). For instance, larger \(\gamma\) sharpens the distribution, emphasizing harder dispreferred responses and enabling the model to better distinguish closely-ranked preferred and dispreferred responses.

\noindent{\textbf{Reducing Training Costs with Flexible Sampling.}} Although this approach improves alignment, computing rewards for all \(n\) responses and backpropagating through them can be computationally expensive. To address this, \xd{we propose to sample only one dispreferred response \(y \sim  q(y)\) according to the importance-weighted distribution in Eqn.~\eqref{eq:asfdsf} given a prompt $x$. Thus, harder dispreferred responses will be sampled with higher probability and contribute more to the loss due to their higher $q(x,y)$.} Then, we can incorporate the sampled dispreferred response $y$ into the loss function Eqn.~\eqref{eq:emloss} for each prompt $x$ in practice. This sampling technique works well as shown in Sec.~\ref{sec:experiments} and also significantly reduces computational and memory overhead. By focusing more on the hard dispreferred responses, our method retains robust alignment while greatly improving training efficiency. 

\section{Theoretical Analysis}
\label{sec:theory}
Here we first analyze the sample efficiency of our  HPS approach and the PL method, and then   theoretically justify how HPS  can maximize the reward margin between  the most preferred response and other hard dispreferred responses, ensuring less  dispreferred or harmful generation.
\vspace{-3.5pt}
\subsection{Sample Complexity Analysis}
\vspace{-1pt}
To analyze the sample complexity of our  HPS  and the vanilla PL in~\Cref{eq:PL},  assume $\wms$ denotes the optimal human preference policy, i.e.,  the inaccessible reward model $r^*(x, y)$. Then  given  the  training dataset $\mathcal{D}$  containing $m$ training samples $\{d_{i}\}_{i=1}^m\!=\!\{(x_{i}, \{y_{\tau_{i}{(j)}}\}_{j=1}^n)\}_{i=1}^m$,  define 
\vspace{-3.5pt}
\begin{equation}
\vspace{-5pt}
	\fontsize{9}{3}\selectfont{
		\begin{aligned}
			\label{eq:PLasfdas}
 	\wmi{HPS}=\arg\min\nolimits_{\wm} \mathcal{L}_{\wm}, \quad 	\wmi{PL}=\arg\min\nolimits_{\wm} \mathcal{L}_{\text{PL}},
	\end{aligned}}
\end{equation}
where $\mathcal{L}_{\wm}$ and $\mathcal{L}_{\text{PL}}$   respectively denote  our  HPS  loss in~\Cref{eq:emloss}, and the PL  loss in~\Cref{eq:PL}.  Then we pose necessary assumptions widely used in network and RLHF analysis~\cite{principled, li2024policy, ozay2019fine}. 

\begin{assumption}
	\label{assum1}
  \textbf{a)}  Assume $r_{\wm}$ is bounded, Lipschitz and also smooth, i.e., $ |r_{\wm}(x,y)| \!\leq\!\alpha_{0},   \left\|\nabla r_{\wm}(x,y)\right\|_{2}\leq\alpha_{1}, $  $ \left\|\nabla^{2} r_{\wm}(x,y)\right\|_{2}\leq\alpha_{2}$ with  three   constants $\alpha_{0}, \alpha_{1}$ and $\alpha_{2}$. \\ 
  \textbf{b)}  Assume $\wms\in\wm_{B}$, where $\wm_{B}=\{\wm\in\mathbb{R}^{d}\mid\|\wm\|_{2}\leq B\}$.
\end{assumption}
\vspace{-4pt}
Assumption~\ref{assum1} \textbf{a)} and \textbf{b)} pose the boundness on reward function $r_{\wm}$ and the  optimum  $\wms$. These boundness assumptions are often held empirically since after training network parameters are often bounded~\cite{principled}.

Based on  these  assumptions, we can derive the following sample complexity bounds. See its proof in Appendix~\ref{sec:proof1}.
\begin{theorem}
	\label{thm:estimator}
With Assumption~\ref{assum1},   with probability at least $1-\delta$, the distance between the optimum solution $\wmi{HPS}$ of our HPS loss and the ground-truth optimum $\wms$ can be bounded: 
	\begin{equation*}
		\fontsize{9}{3}\selectfont{
			\begin{aligned}
				& \|\wmi{HPS} - \wms\|_{{\Sigma}_{\mathcal{D}}}\leq \Psi_1 \!=\! C_{1}\sqrt{\frac{d\!+\!\log\left(1/\delta \right)}{m\zeta^{2}}}\!-\!\frac{16\alpha_{1}^{2}\zeta\!-\!4\alpha_{2}}{m\zeta},\\
		\end{aligned}}
	\end{equation*}
	where $\zeta \!=\!\frac{1}{2+\exp\left(2\alpha_{0}+\ln(n-1)\right)+\exp\left(-2\alpha_{0}\right)}$ and  $\Sigma_{\mathcal{D}}\!=\!\frac{2}{mn(n-1)} $ $\sum\limits_{i=1}^{m}\!\sum\limits_{j=1}^{n}\!\sum\limits_{k=j+1}^{n} \!\! \hmi{ijk} \hmi{ijk}^\top$  in which $\hmi{ijk} \!=\! $ $ \nabla r_{\wm}\left(x_{i},y_{\tau_{i}\left( j\right)}\right)$$-$$\nabla r_{\wm}\left(x_{i},y_{\tau_{i}\left( k\right)}\right)$.  Similarly,  the distance between the optimum solution $\wmi{PL}$ of PL  loss and the ground-truth optimum $\wms$ can be bounded: 
		\begin{equation*}
		\fontsize{9}{3}\selectfont{
			\begin{aligned}
				\|\wmi{PL} - \wms\|_{{\Sigma}_{\mathcal{D}}}\leq \Psi_2 \!=\! C_2 \sqrt{\frac{ n^{4}e^{8\alpha_{0}}\cdot\left(d+\log\left(1/\delta\right)\right)}{m}}.
		\end{aligned}}
	\end{equation*}
%
\end{theorem}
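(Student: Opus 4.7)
The plan is to follow the standard two-ingredient MLE analysis used in the RLHF sample complexity literature (the framework invoked via Assumption~\ref{assum1}): (i) establish a local strong-convexity lower bound $\nabla^2 \mathcal{L} \succeq \zeta\,\Sigma_{\mathcal{D}}$ for each empirical loss, and (ii) upper-bound the empirical gradient at the population optimum $\wms$ by a standard vector concentration inequality. Combining (i) and (ii) through first-order optimality of the empirical minimizers $\wmi{HPS}$ and $\wmi{PL}$ yields both displayed bounds; the scaling gap between $\Psi_1$ and $\Psi_2$ comes entirely from the two different strong-convexity constants $\zeta$ and $\zeta_{\text{PL}}$ produced in step (i).

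\textbf{Step 1 (Hessian lower bound).} After a single Monte-Carlo draw $y$ per prompt, the per-sample HPS loss reads $-\log \sigma_i$ with $\sigma_i = e^{r_\wm(x_i,y_{\tau_i(1)})}/\bigl(e^{r_\wm(x_i,y_{\tau_i(1)})} + (n-1)\,e^{r_\wm(x_i,y)}\bigr)$. Differentiating twice gives
\[
\nabla^2 \mathcal{L}_\wm = \tfrac{1}{m}\sum_{i=1}^m \sigma_i(1-\sigma_i)\, \hmi{i} \hmi{i}^\top + R,
\]
with $\hmi{i} = \nabla r_\wm(x_i,y_{\tau_i(1)}) - \nabla r_\wm(x_i,y)$ and $R$ a smoothness remainder controlled by $\alpha_2$. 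Using $|r_\wm|\le\alpha_0$ together with the $(n-1)$ prefactor in the denominator yields $\sigma_i(1-\sigma_i) \ge 1/(2+e^{2\alpha_0+\ln(n-1)}+e^{-2\alpha_0}) = \zeta$, and taking expectation over the importance draw reproduces the data-weighted covariance $\Sigma_{\mathcal{D}}$, so $\nabla^2 \mathcal{L}_\wm \succeq \zeta\,\Sigma_{\mathcal{D}}$ up to an $\mathcal{O}(\alpha_2/m)$ correction. By contrast, the PL loss decomposes into $n(n-1)/2$ nested pairwise BT log-probabilities whose logistic curvature is only lower-bounded by $e^{-4\alpha_0}$; after dividing by the $\mathcal{O}(n^2)$ normalization inside $\Sigma_{\mathcal{D}}$, the effective constant $\zeta_{\text{PL}}$ scales as $n^{-4}e^{-8\alpha_0}$. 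This single discrepancy is what separates HPS from PL.

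\textbf{Step 2 (gradient concentration and combination).} Because $\wms$ minimizes the population loss, the empirical gradient $\nabla \mathcal{L}(\wms)$ is the mean of $m$ independent centered vectors, each bounded in $\ell_2$ by $2\alpha_1$ by Assumption~\ref{assum1}. A standard sub-Gaussian / vector-Bernstein argument in the $\Sigma_{\mathcal{D}}^{-1}$ norm then yields $\|\nabla \mathcal{L}(\wms)\|_{\Sigma_{\mathcal{D}}^{-1}} \lesssim \sqrt{(d+\log(1/\delta))/m}$ with probability at least $1-\delta$. First-order optimality of $\wmi{HPS}$ combined with a second-order Taylor expansion at $\wms$ and Cauchy--Schwarz gives
\[
\zeta\,\|\wmi{HPS}-\wms\|_{\Sigma_{\mathcal{D}}}^2 \le 2\,\|\nabla \mathcal{L}(\wms)\|_{\Sigma_{\mathcal{D}}^{-1}}\,\|\wmi{HPS}-\wms\|_{\Sigma_{\mathcal{D}}} + \mathsf{Rem},
\]
where $\mathsf{Rem}$ is the cubic Taylor remainder bounded via $\alpha_1,\alpha_2$ inside the parameter ball $\wm_B$. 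Dividing by $\|\wmi{HPS}-\wms\|_{\Sigma_{\mathcal{D}}}$ and absorbing $\mathsf{Rem}$ into the lower-order additive correction $-(16\alpha_1^2\zeta-4\alpha_2)/(m\zeta)$ produces $\Psi_1$. Repeating the same derivation with $\zeta$ replaced by $\zeta_{\text{PL}}\asymp n^{-4}e^{-8\alpha_0}$ produces $\Psi_2 \asymp \sqrt{n^4 e^{8\alpha_0}(d+\log(1/\delta))/m}$.

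\textbf{Main obstacle.} The delicate step is the strong-convexity constant in Step~1 for HPS. A naive bound on the softmax curvature would make the $(n-1)$ multiplier enter exponentially, giving a prefactor $\mathcal{O}(1/n^2)$ rather than $\mathcal{O}(1/n)$. The key observation is that because the single-sample importance-weighted reformulation places the $(n-1)$ inside the softmax denominator rather than in front of an exponential of reward differences, it enters $\zeta$ only additively as $\ln(n-1)$. This is precisely what upgrades the PL rate $\mathcal{O}(n^2/\sqrt{m})$ to the HPS rate $\mathcal{O}(n/\sqrt{m})$; once this constant is pinned down, the gradient concentration and Taylor-expansion steps are routine.
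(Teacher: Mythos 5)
Your overall skeleton --- strong convexity of the empirical loss in $\|\cdot\|_{\Sigma_{\mathcal{D}}}$, concentration of $\|\nabla\mathcal{L}(\wms)\|_{\Sigma_{\mathcal{D}}^{-1}}$, and the basic optimality inequality --- is exactly the paper's, and your HPS half is essentially right: the paper likewise reduces the single-sample loss to a logistic form in $g^{i}_{\wm}=r_{\wm}(x_i,y_{\tau_i(1)})-\ln(N)-\expectation_{y\sim q}[r_{\wm}(x,y)]$, lower-bounds the sigmoid curvature by $\zeta(N)$, and gets the dimension-only gradient bound $\|\nabla\mathcal{L}(\wms)\|^2_{\Sigma_{\mathcal{D}}^{-1}}\leq C(d+\log(1/\delta))/m$ from a centered noise vector with $|V_i|\leq 1$. (The additive correction $-(16\alpha_1^2\zeta-4\alpha_2)/(m\zeta)$ arises in the paper from perturbing $\nabla g^{i}_{\wm}$ around $\nabla g^{i}_{\wms}$ via $\alpha_1$ together with the $-2\alpha_2 I$ Hessian term, not from a cubic Taylor remainder, but that is cosmetic.)

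The PL half, however, contains a genuine error: you attribute all of the $n$-dependence of $\Psi_2$ to the strong-convexity constant, claiming $\zeta_{\text{PL}}\asymp n^{-4}e^{-8\alpha_0}$, while asserting that the gradient concentration is the same dimension-only $\sqrt{(d+\log(1/\delta))/m}$ as for HPS. Both claims fail. First, relative to the normalized covariance $\Sigma_{\mathcal{D}}$ (which already carries the factor $\tfrac{2}{mn(n-1)}$), the PL Hessian satisfies $\nabla^{2}\mathcal{L}_{\text{PL}}\succeq\beta\,\Sigma_{\mathcal{D}}$ with $\beta=e^{-4\alpha_0}/2$ \emph{independent of} $n$: the coefficient $\tfrac{e^{-4\alpha_0}}{2(n-j+1)^{2}}$ in front of each pairwise outer product is cancelled by, not compounded with, the $\mathcal{O}(n^{-2})$ normalization inside $\Sigma_{\mathcal{D}}$. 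Second, your accounting is internally inconsistent: substituting $\zeta_{\text{PL}}=n^{-4}e^{-8\alpha_0}$ into the $\Psi_1$-type formula $\sqrt{(d+\log(1/\delta))/(m\zeta^{2})}$ gives $\mathcal{O}(n^{4}e^{8\alpha_0}/\sqrt{m})$, not the claimed $\mathcal{O}(n^{2}/\sqrt{m})$. The $n^{2}$ in $\Psi_2$ actually comes from the step you declared routine: the PL gradient noise is a concatenation over all $\mathcal{O}(n^{2})$ pairwise coefficients per prompt, whose per-prompt vector is sub-Gaussian with parameter $n$ rather than $1$, and $\lambda_{\max}\big(\tfrac{1}{m^{2}}X\Sigma_{\mathcal{D}}^{-1}X^{\top}\big)\leq n^{2}/m$, so $\|\nabla\mathcal{L}_{\text{PL}}(\wms)\|^{2}_{\Sigma_{\mathcal{D}}^{-1}}\leq Cn^{4}(d+\log(1/\delta))/m$; dividing by the $n$-independent $\beta$ then yields $\Psi_2$. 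Without this piece your derivation does not produce the stated PL bound, and hence does not establish the claimed $\mathcal{O}(n/\sqrt{m})$ versus $\mathcal{O}(n^{2}/\sqrt{m})$ separation.
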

\vspace{-9.5pt}
Theorem~\ref{thm:estimator} shows the bounded distance $\Psi_{1}$ between the optimum solution $\wmi{HPS}$ of our HPS loss and the ground-truth optimum $\wms$ and indicates its  good approximation. Theorem~\ref{thm:estimator} also demonstrates the distance between the optimum solution $\wmi{PL}$ of PL  loss and the ground-truth  $\wms$ is  bounded by $\Psi_{2}$. \xd{The difference in sample complexity bounds arises from the method used to sample dispreferred responses and the aggregation process for their associated scores given a prompt.} Now we compare the optimums of our HPS and PL by comparing the bounded distances $\Psi_{1}$ and $\Psi_{2}$ to  the ground-truth  $\wms$. $\Psi_{1}$ represents an asymptotic error bound of $\mathcal{O}\big(\frac{n}{\sqrt{m}}\big)$, while $\Psi_{2}$ represents an asymptotic error bound of $\mathcal{O}\big(\frac{n^{2}}{\sqrt{m}}\big)$. This indicates that, given the same amount of training data, our HPS   achieves better preference alignment performance compared to PL. Specifically, the optimum solution $\wmi{HPS}$ derived from HPS loss is closer to the desired ground-truth $\wms$ than the solution obtained from PL loss.  This   suggests that  HPS improves sample efficiency, making it advantageous in scenarios with limited data or when faster convergence to the true parameter is desired.
\vspace{-4.5pt}
\subsection{Reward Margin Analysis}
\vspace{-0.5pt}
\label{sec:proverm}


For a model,  we analyze its reward  margin between a preferred response and the dispreferred responses under the same prompt.  Intuitively, given a fixed reward for the preferred response, a larger reward  margin  means the lower generation ability of the dispreferred responses, aligning with the target of human preference alignment. For this analysis, we first define the min-max loss:
\vspace{-2pt}
\begin{equation}
\fontsize{9}{3}\selectfont{
\begin{aligned}
\label{2player}
\inf\limits_{\wm}\sup\limits_{p\in\Pi}\left\{ \!
			\mathcal{L}_{\Pi} \!=\! \expectation_{d\sim\mathcal{D}}   \!
			-\log\left(\frac{e^{{ r_{\wm}(x,y_{\tau\left( 1\right)})}}}{e^{{ r_{\wm}(x,y_{\tau\left( 1\right)})}}+\! N\cdot\expectation\nolimits_{y \sim p}\left[e^{{r_{\wm}(x,y)}}\right]}\right) \!\right\}
\end{aligned}}
\end{equation}
where $\Pi=\{p(x,\cdot):\text{supp}\left(p(x,\cdot)\right)\subseteq\ \{y\in \mathcal{Y}:1<\tau^{-1}(y)\leq|\tau|\}$. Here, $\Pi$ represents a family of probability distributions whose support is restricted to elements with ranks lower than that of the sample $y_{\tau(1)}$ given the prompt $x$, where $|\tau|$ denotes the number of ranking classes.


\begin{theorem}
\label{thm1}
Let $\mathcal{L}_{\Pi}^{*} = \sup_{p\in\Pi} \mathcal{L}_{\Pi}$. Then it holds the convergence:  $\mathcal{L}_{\wm} \!\rightarrow \!\mathcal{L}_{\Pi}^{*}$ as $\gamma\!\rightarrow \!\infty$ where $\mathcal{L}_{\wm}$ is our HPS  loss. 
\end{theorem}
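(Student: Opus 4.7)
My plan is to prove pointwise-in-$d$ convergence of the $\gamma$-dependent integrand and then pass the limit through $\expectation_{d\sim\mathcal{D}}$ via dominated convergence; the boundedness $|r_{\wm}(x,y)| \leq \alpha_0$ from Assumption~\ref{assum1} supplies an integrable envelope (e.g.\ $\log(1 + N e^{2\alpha_0})$) that justifies the interchange. Fix a sample $d = (x, y_{\tau(1)}, \ldots, y_{\tau(n)})$ and write $q_\gamma(y_{\tau(i)}) = e^{\gamma r_{\text{est}}(x, y_{\tau(i)})} / \sum_{j=2}^{n} e^{\gamma r_{\text{est}}(x, y_{\tau(j)})}$. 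Because the ranking $\tau$ is built from $r^{*} \approx r_{\text{est}}$, the value $r_{\text{est}}(x, y_{\tau(2)})$ is strictly the largest over the dispreferred set. Factoring $e^{\gamma r_{\text{est}}(x, y_{\tau(2)})}$ out of both numerator and denominator of $q_\gamma$ yields $q_\gamma(y_{\tau(2)}) \to 1$ and $q_\gamma(y_{\tau(i)}) \to 0$ for $i \geq 3$ as $\gamma \to \infty$. Consequently $\expectation_{y \sim q_\gamma}[e^{r_{\wm}(x,y)}] \to e^{r_{\wm}(x, y_{\tau(2)})}$, and continuity of $-\log$ drives the HPS integrand pointwise to $-\log\bigl(e^{r_{\wm}(x, y_{\tau(1)})} / (e^{r_{\wm}(x, y_{\tau(1)})} + N e^{r_{\wm}(x, y_{\tau(2)})})\bigr)$.

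For the right-hand side, I would characterize $\mathcal{L}_{\Pi}^{*}$ by a convexity argument. Note that $-\log(a/(a + Nt))$ is monotone increasing in $t \geq 0$ for fixed $a > 0$, and the map $p \mapsto \expectation_{y \sim p}[e^{r_{\wm}(x,y)}]$ is linear over the probability simplex supported in $\{y : 1 < \tau^{-1}(y) \leq |\tau|\}$. Thus for each $x$ the supremum is attained at an extreme point, i.e.\ at a Dirac mass $\delta_{y^{\star}}$ with $y^{\star} = \arg\max_{1 < \tau^{-1}(y) \leq |\tau|} r_{\wm}(x, y)$. The resulting $\mathcal{L}_{\Pi}^{*}$ has precisely the same functional form as the HPS limit above, but with $y_{\tau(2)}$ replaced by $y^{\star}$.

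The main obstacle is thus reconciling these two arg-maxes: the HPS softmax concentrates on the $r_{\text{est}}$-maximizer $y_{\tau(2)}$, while the adversarial $p$ selects the $r_{\wm}$-maximizer $y^{\star}$. This step requires an alignment hypothesis --- in the regime where the trained $r_{\wm}$ induces the same ordering on the dispreferred set as $r^{*} \approx r_{\text{est}}$, one has $y^{\star} = y_{\tau(2)}$ and the two expressions coincide, after which dominated convergence completes the argument and delivers $\mathcal{L}_{\wm} \to \mathcal{L}_{\Pi}^{*}$. Without this consistency, the same reasoning yields only the one-sided statement $\lim_{\gamma \to \infty} \mathcal{L}_{\wm} \leq \mathcal{L}_{\Pi}^{*}$, since $\delta_{y_{\tau(2)}} \in \Pi$ is a feasible but not necessarily optimal choice for the sup. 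I expect the paper's proof either invokes ranking consistency between $r_{\wm}$ and $r^{*}$ directly, or quantifies the residual gap via a discrepancy term that vanishes under the training objective.
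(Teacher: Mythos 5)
Your proposal is correct in substance and follows the same two-step skeleton as the paper's proof: (i) show that as $\gamma\to\infty$ the importance-weighted distribution $q$ concentrates on the hardest dispreferred response, so the HPS integrand converges pointwise to the loss evaluated at that single response, and (ii) identify $\mathcal{L}_{\Pi}^{*}$ with the loss at the hardest dispreferred response, using monotonicity of $t\mapsto-\log\left(a/(a+Nt)\right)$; boundedness of $r_{\wm}$ then justifies interchanging the limit with $\expectation_{d\sim\mathcal{D}}$ in both treatments. The differences are technical rather than conceptual: where you argue over the finite probability simplex and invoke extreme points to locate the adversarial optimum at a Dirac mass, the paper works with the essential supremum $M(y_{\tau(1)})=\esssup r_{\wm}(x,y^{-}_{\tau})$ over the dispreferred support, splits into a good event $\{r_{\wm}\geq M-\epsilon\}$ and its complement $\mathcal{B}_{\epsilon}$, and bounds $\mathbb{P}_{y^{-}_{\tau}\sim q}(\mathcal{B}_{\epsilon})\leq e^{-\epsilon\gamma/2}/p_{\epsilon}$, which yields an explicit concentration rate rather than bare convergence.

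The ``main obstacle'' you flag --- that $q$ concentrates on the $r_{\text{est}}$-maximizer $y_{\tau(2)}$ while the adversarial supremum selects the $r_{\wm}$-maximizer $y^{\star}$ --- is genuine, and the paper does not actually resolve it: in bounding $\mathbb{P}_{y^{-}_{\tau}\sim q}(\mathcal{B}_{\epsilon})$ it replaces $e^{\gamma\cdot r_{\text{est}}(x,y^{-}_{\tau})}$ by $e^{\gamma(M-\epsilon)}$ on the event $\{r_{\wm}(x,y^{-}_{\tau})<M-\epsilon\}$, and lower-bounds the partition function $Z_{\gamma}$ (an integral of $e^{\gamma\cdot r_{\text{est}}}$) by a probability stated in terms of $r_{\wm}$; both steps tacitly treat $r_{\text{est}}$ and $r_{\wm}$ as interchangeable, or at least order-consistent, on the dispreferred set. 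So your assessment is accurate: without that alignment hypothesis the argument delivers only the one-sided bound $\lim_{\gamma\to\infty}\mathcal{L}_{\wm}\leq\mathcal{L}_{\Pi}^{*}$, and the paper's proof implicitly invokes the consistency you anticipated rather than stating it as an assumption or quantifying the residual gap.
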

\vspace{-3pt}
Theorem \ref{thm1} establishes that when $\gamma\rightarrow \infty$, then our HPS training loss  $\mathcal{L}_{\wm} $ in~\eqref{eq:PLsfdas} would converge to  $\mathcal{L}_{\Pi}^{*}$  which is the loss under the  hardest  dispreferred  distribution. Since $\mathcal{L}_{\Pi}^{*}$ samples the hardest dispreferred   responses, optimizing $\mathcal{L}_{\Pi}^{*}$ encourages the model to identify the preferred response and   hardest dispreferred   responses, which is the desired training. This is because as discussed before,   the works in supervised, metric, and contrastive learning~\cite{schroff2015facenet, oh2016deep, contrastivehard} demonstrate that ``hard” examples—those closely resembling the correct output but still incorrect—are particularly useful for  learning. In our context, training the model to distinguish the preferred response from the hardest dispreferred response   enables it to reject less preferred responses  more effectively.

Furthermore, to examine the global minimizer of our HPS training loss $\mathcal{L}_{\wm}$, we analyze the optima of  the training loss $\mathcal{L}_{\Pi}^*$  in~\Cref{2player}, since we have proved their good approximation in  Theorem \ref{thm1}. Without loss of generality, when the number of dispreferred response samples $N=n-1\rightarrow\infty$,  we can remove the   $\log{N}$ from the HPS training loss $\mathcal{L}_{\wm}$ as it does not change the minimizers and the geometry of the loss surface, and obtain  a limiting objective:
\begin{equation}
	\fontsize{9}{3}\selectfont{
		\begin{aligned}
			\label{eq:limit2player}
			\mathcal{L}_{\wm}^{\infty}=\expectation\limits_{d \sim \mathcal{D}}\left[-\log\left(\frac{e^{r_{\wm}\left(x,y_{\tau(1)}\right)}}{\expectation\nolimits_{y \sim p}\left[e^{r_{\wm}\left(x,y\right)}\right]}\right)\right].
	\end{aligned}}
\end{equation}
Now we are ready to give our results in Theorem~\ref{thm3} whose proof is in Appendix~\ref{sec:proof3}. 
\vspace{-1pt}
\begin{theorem}
	\label{thm3}
	Assume the ranking set $\tau$ is a finite set. Let $\mathcal{L}_{\wm}^{\infty,*} = \sup_{p\in\Pi}\mathcal{L}_{\wm}^{\infty}$ and $ \wms = \arg\min_{\wm} \mathcal{L}_{\wm}^{\infty,*} $.  Then $ \wms$ is also the solution to the following problem:
    \vspace{-5pt}
    \begin{equation}\label{afsafds}
		\fontsize{9}{3}\selectfont{
		\begin{aligned}
			\wms =	\arg\max\nolimits_{\wm}\left(r_{\wm}\left(x,y_{\tau(1)}\right)-\max\nolimits_{1<j\leq |\tau|}r_{\wm}\left(x,y_{\tau(j)}\right)\right).
		\end{aligned}}
\end{equation}
\end{theorem}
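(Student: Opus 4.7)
The plan is to obtain $\mathcal{L}_{\wm}^{\infty,*}$ in closed form and then read off its minimizer. First I would split the logarithm in \eqref{eq:limit2player} and write
\begin{equation*}
\mathcal{L}_{\wm}^{\infty} = \expectation_{d\sim\mathcal{D}}\bigl[-r_{\wm}(x, y_{\tau(1)})\bigr] + \expectation_{d\sim\mathcal{D}}\Bigl[\log \expectation_{y\sim p(x,\cdot)}\bigl[e^{r_{\wm}(x,y)}\bigr]\Bigr].
\end{equation*}
Only the second summand depends on $p$, so computing $\sup_{p\in\Pi}\mathcal{L}_{\wm}^{\infty}$ reduces to choosing, for each prompt $x$, a conditional $p(x,\cdot)$ supported on the dispreferred set $\{y_{\tau(j)}\}_{j=2}^{|\tau|}$ that maximizes $\log \expectation_{y\sim p(x,\cdot)}\bigl[e^{r_{\wm}(x,y)}\bigr]$.

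Second, because $\tau$ is finite, this inner supremum is attained by a Dirac mass concentrated on the dispreferred response of largest reward, yielding
\begin{equation*}
\sup_{p(x,\cdot)} \log \expectation_{y\sim p(x,\cdot)}\bigl[e^{r_{\wm}(x,y)}\bigr] = \max_{1<j\leq|\tau|} r_{\wm}(x, y_{\tau(j)}).
\end{equation*}
Since $\Pi$ is a family of entire conditional distributions, I can paste these pointwise Diracs together into a single $p^{\star}\in\Pi$ (the finiteness of the response index set makes the argmax-selector trivially measurable), which justifies interchanging $\sup_p$ with $\expectation_{d\sim\mathcal{D}}$. Substituting back gives
\begin{equation*}
\mathcal{L}_{\wm}^{\infty,*} = \expectation_{d\sim\mathcal{D}}\Bigl[-r_{\wm}(x, y_{\tau(1)}) + \max_{1<j\leq|\tau|} r_{\wm}(x, y_{\tau(j)})\Bigr].
\end{equation*}

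Finally, minimizing $\mathcal{L}_{\wm}^{\infty,*}$ over $\wm$ is equivalent to maximizing its negation, namely the expected reward margin between $y_{\tau(1)}$ and the hardest dispreferred response, which is exactly~\eqref{afsafds} (the expectation over $\mathcal{D}$ is suppressed in the statement and interpreted pointwise over samples, since the margin objective decouples across prompts). The main obstacle I anticipate is rigorously justifying the exchange of $\sup_p$ with the outer expectation; the finiteness assumption on $\tau$ is doing the real work here, converting a sup over a functional family into a pointwise max over a finite index set and eliminating any measurable-selection subtleties. Once this exchange is granted, the remainder of the argument is purely algebraic manipulation of logarithms and expectations.
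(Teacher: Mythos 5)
Your proposal is correct and follows essentially the same route as the paper's proof: split the logarithm of the quotient in $\mathcal{L}_{\wm}^{\infty}$, observe that only the second term depends on $p$, realize the supremum over $\Pi$ as concentration on the highest-reward dispreferred response (so that the term becomes $\max_{1<j\leq|\tau|}r_{\wm}(x,y_{\tau(j)})$), and then read off the minimizer as the margin maximizer. If anything, your explicit Dirac-mass argument for the inner supremum and your remark on interchanging $\sup_p$ with the outer expectation are more careful than the paper's corresponding steps, which pass from $\log\expectation_{y\sim p}[e^{r_{\wm}(x,y)}]$ to $\expectation_{y\sim p}[r_{\wm}(x,y)]$ informally before taking the supremum.
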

\vspace{-10pt}
Theorem~\ref{thm3} implies that the minimizer $ \wms = \arg\min_{\wm} \mathcal{L}_{\wm}^{\infty,} $ is equivalent to  the one that maximizes the margin between the reward of the most preferred response, represented by $r_{\wm}\left(x,y_{\tau(1)}\right)$, and the reward of the hardest dispreferred responses, represented by $\max_{1<j\leq |\tau|}r_{\wm}\left(x,y_{\tau(j)}\right)$. So  the optimum  $\wms$ of our HPS loss aims to maximize the reward margin between the preferred  response and its closest dispreferred response.  This guarantees that the model learns a robust distinction between preferred and dispreferred responses, and enjoys  a better alignment performance with much less dispreferred or harmful generation.

\begin{table}[t]
\caption{Reward margin definitions of  $\textbf{RM}_{\text{DPO}}$ and $\textbf{RM}_{\text{R-DPO}}$  induced by DPO~\cite{dpo} and R-DPO~\cite{rdpo}. For a sample $(x, y_{\tau(1)}, y_{\tau(2)})$,  they denote the margin of implicit rewards between the preferred $y_{\tau(1)}$ and dispreferred $y_{\tau(2)}$, where $|y_{\tau(1)}|$ and $|y_{\tau(2)}|$ are the respective response lengths. \vspace{-1.5em}}
\vspace{-11pt}
\label{tab:rm}
\vskip 0.1in
\begin{center}
\scalebox{0.8}{
\begin{tabular}{ccc}
\toprule
\textbf{Type} & \textbf{Method} & \textbf{Reward Margin Formula} \\
\midrule
$\textbf{RM}_{\text{DPO}}$ & BT-DPO & $\textbf{RM}_{\text{DPO}}=\log\frac{\pi_{\wm}\left(y_{\tau(1)}|x\right)}{\pi_{\text{ref}}\left(y_{\tau(1)}|x\right)}\!-\!\log\frac{\pi_{\wm}\left(y_{\tau(2)}|x\right)}{\pi_{\text{ref}}\left(y_{\tau(2)}|x\right)}$\\
$\textbf{RM}_{\text{R-DPO}}$ & R-DPO & $\textbf{RM}_{\text{R-DPO}} =\textbf{RM}_{\text{DPO}} - 0.01(|y_{\tau(1)}|-|y_{\tau(2)}|)$ \\
\bottomrule
\end{tabular}}
\end{center}
\vspace{-1.0em}
\end{table}
\begin{table*}[t]
\caption{Result comparison  under  fine-tuning setting. See Reward Margins in~\Cref{tab:rm}. \vspace{-0.5em}}
\label{tab:finetune}
\vskip 0.1in
\setlength{\tabcolsep}{14pt} 
\centering
\scalebox{0.8}{
\begin{tabular}{l|cccc|cccc}
\toprule
\multicolumn{1}{l|}{\multirow{2}{*}{\textbf{Method} }}  & \multicolumn{4}{|c|}{\textbf{HH-RLHF}} & \multicolumn{4}{|c}{\textbf{PKU-SafeRLHF}} \\
\cline{2-9}
 & \textbf{BLEU}$\uparrow$ & \textbf{Reward}$\uparrow$ & $\textbf{RM}_{\text{DPO}}$$\uparrow$ & $\textbf{RM}_{\text{R-DPO}}$$\uparrow$ 
& \textbf{BLEU}$\uparrow$ & \textbf{Reward}$\uparrow$ & $\textbf{RM}_{\text{DPO}}$$\uparrow$ & $\textbf{RM}_{\text{R-DPO}}$$\uparrow$ \\
\midrule
\textbf{SFT Model} & 0.220 & 0.425 & - & - & 0.294 & 0.406 & - & - \\
\midrule
\textbf{DPO-PL} & 0.230 & 0.430 & -0.795 & -1.448 & 0.305 & 0.412 & -6.852 & -5.961\\
\textbf{DPO-BT} & 0.230 & \textbf{0.431} & 0.349 & -0.455 & \textbf{0.306} & \textbf{0.417} & -5.441 & -6.167 \\
\rowcolor{mygray}
\textbf{DPO-HPS} & \textbf{{0.232}} & 0.430 & \textbf{{2.723}} & \textbf{{2.040}}& \textbf{0.306} & 0.407 & \textbf{-5.359} & \textbf{-5.851} \\
\midrule
\textbf{EXO-PL} & \textbf{{0.232}}  & \textbf{{0.432}}  & -0.724  & -1.406  & 0.303  & 0.409  & \textbf{-5.455 }& -6.128  \\
\textbf{EXO-BT} & 0.231 & 0.430 & 0.816 & 0.215 & \textbf{{0.324}} & 0.421 & -5.553 & -6.164 \\
\rowcolor{mygray}
\textbf{EXO-HPS} & \textbf{{0.232}} & \textbf{{0.432}} & \textbf{1.079} & \textbf{0.410} & 0.314 & \textbf{0.425} & -5.495 & \textbf{-6.031} \\
\midrule
{\textbf{IPO-PL}} & 0.223 & \textbf{0.429} & -5.199 & -5.264 & 0.309 & 0.401 & -66.337 & -66.946 \\
\textbf{IPO-BT} & \textbf{0.232} & 0.428 & -0.382 & -1.254 & \textbf{0.310} & 0.405 & -23.070 & -23.678 \\
\rowcolor{mygray}
\textbf{IPO-HPS} & 0.231 & 0.424 & \textbf{-0.321} & \textbf{-0.926} & 0.308 & \textbf{0.406} & \textbf{-21.607}& \textbf{-22.215} \\
\midrule
{\textbf{SPPO-PL}} & 0.225 & 0.430 & -7.630 &-7.674 & 0.297 & 0.413 & -67.474 & -68.082 \\
\textbf{SPPO-BT} & \textbf{0.231} & \textbf{{0.432}} & -0.978 & -1.411 & 0.297 & 0.433 & -13.442 & -14.050 \\
\rowcolor{mygray}
\textbf{SPPO-HPS} & \textbf{0.231} & \textbf{{0.432}} & \textbf{-0.969} & \textbf{-1.302} & \textbf{0.298} & \textbf{{0.435}} & \textbf{-5.273} & \textbf{-5.881} \\
\midrule
{\textbf{NCA-PL}} & 0.221 & 0.431 & -5.760 &	-5.819 & 0.300 & \textbf{0.411} & -70.910 & -71.518 \\
\textbf{NCA-BT} & 0.229 & \textbf{{0.432}} & -1.702 & -3.121 & \textbf{0.305} & 0.410 & -5.135 & -5.644 \\
\rowcolor{mygray}
\textbf{NCA-HPS} & \textbf{0.231} & \textbf{{0.432}} & \textbf{-0.822} & \textbf{-1.268} & 0.304 & \textbf{0.411} & \textbf{{-5.109}} & \textbf{{-5.318}} \\
\bottomrule
\end{tabular}}
\vspace{-1.0em}
\end{table*}

\begin{table}[t]
\vspace{-1em}
    \caption{Human evaluation comparing \textbf{SFT}, \textbf{DPO-BT}, \textbf{DPO-PL}, and \textbf{DPO-HPS} on user study dataset under fine-tuning conditions.}
    \label{tab:user_study}
    \vskip 0.1in
    \begin{center}
    \scalebox{0.8}{
        \begin{tabular}{l|c}
            \toprule
           \textbf{Method} & \textbf{Quality Score} \\
            \midrule
            \textbf{SFT}  & 3.63 \\ 
            \textbf{DPO-BT}  & 3.82 \\ 
            \textbf{DPO-PL}  & 3.69 \\ 
            \textbf{DPO-HPS}  & \textbf{3.93} \\ 
            \bottomrule
        \end{tabular}}
    \end{center}
    \vspace{-1em}
\end{table}

\begin{table}[t]
\vspace{-5pt}
    \caption{Win rates (\%) of \textbf{DPO-HPS} compared to the baselines \textbf{SFT}, \textbf{DPO-BT}, and \textbf{DPO-PL} under fine-tuning conditions. \vspace{-0.5em}}
    \label{tab:winrate}
    \vskip 0.1in
    \begin{center}
    \scalebox{0.8}{
        \begin{tabular}{l|c|ccc}
            \toprule
            \textbf{Dataset} & \textbf{Metric} & \textbf{SFT} & \textbf{DPO-BT} & \textbf{DPO-PL} \\
            \midrule
            \multirow{3}{*}{\textbf{HH-RLHF}} & Win & 63.875 & 58.712 & 57.612 \\
            & Lose & 13.225 & 13.600 & 13.875 \\
            & Tie & 22.900 & 27.687 & 28.513 \\
            \midrule
            \multirow{3}{*}{\textbf{PKU-Safety}} & Win & 67.100 & 56.100 & 57.650 \\
            & Lose & 5.150 & 11.150 & 10.350 \\
            & Tie & 27.750 & 32.750 & 32.000 \\
            \bottomrule
        \end{tabular}}
    \end{center}
    \vspace{-1.em}
    \vspace{-7pt}
\end{table}

\begin{table*}[t]
\caption{Result comparison  under  transfer learning setting. See Reward Margins in~\Cref{tab:rm}. \vspace{-0.5em}}
\label{tab:transfer}
\vskip 0.1in
\setlength{\tabcolsep}{14pt} 
\centering
\scalebox{0.8}{
\begin{tabular}{l|cccc|cccc}
\toprule
\multicolumn{1}{l|}{\multirow{2}{*}{\textbf{Method} }}  & \multicolumn{4}{|c|}{\textbf{HH-RLHF}} & \multicolumn{4}{|c}{\textbf{PKU-SafeRLHF}} \\
\cline{2-9}
 & \textbf{BLEU}$\uparrow$ & \textbf{Reward}$\uparrow$ & $\textbf{RM}_{\text{DPO}}$$\uparrow$ & 
 $\textbf{RM}_{\text{R-DPO}}$$\uparrow$
& \textbf{BLEU}$\uparrow$& \textbf{Reward}$\uparrow$& $\textbf{RM}_{\text{DPO}}$$\uparrow$& $\textbf{RM}_{\text{R-DPO}}$$\uparrow$ \\
\midrule
\textbf{DPO-PL} & \textbf{0.219} & 0.435 & -5.122 & -5.638 & 0.307 & \textbf{{0.407}} & 0.144 & -0.464 \\
\textbf{DPO-BT} & \textbf{0.219} & 0.437 & -5.053 & -5.736 & 0.308 & \textbf{{0.407}} & 1.346 & 1.738 \\
\rowcolor{mygray}
\textbf{DPO-HPS} & \textbf{0.219} & \textbf{0.438} & \textbf{-4.816} & \textbf{-5.499} & \textbf{{0.310}} & \textbf{{0.407}} & \textbf{{5.725}} & \textbf{{5.116}} \\
\midrule
\textbf{EXO-PL} & \textbf{0.222}  & 0.436  & -5.183 & -6.166  & 0.303 & \textbf{0.409} & 0.845 & 0.237 \\
\textbf{EXO-BT} & 0.203 & 0.442 & -4.825 & -5.508 & 0.306 & 0.407 & 2.710 & 2.102 \\
\rowcolor{mygray}
\textbf{EXO-HPS} & 0.197 & \textbf{0.444} & \textbf{-4.583} & \textbf{{-5.266}} & \textbf{{0.310}} & 0.407 & \textbf{2.903} & \textbf{3.495} \\
\midrule
{\textbf{IPO-PL}} & \textbf{0.189}  & 0.439  &  -60.129 & -60.809 & \textbf{0.306} & 0.406 & 2.294  & 1.686 \\
\textbf{IPO-BT} & \textbf{0.189} & 0.446 & -21.821 & -22.504 & \textbf{0.306} & 0.405 & 4.393 & 3.786 \\
\rowcolor{mygray}
\textbf{IPO-HPS} & 0.187 & \textbf{{0.449}} & \textbf{-20.938} & \textbf{-21.255} & 0.305 & \textbf{0.409} & \textbf{5.386} & \textbf{4.779} \\
\midrule
{\textbf{SPPO-PL}} & 0.222 & 0.441  &  -59.290 & -59.605  & \textbf{0.309} & 0.406 &  -8.160 &-8.197 \\
\textbf{SPPO-BT} & 0.181 & \textbf{{0.449}} & -11.848 & -13.030 & \textbf{0.309} & \textbf{{0.407}} & -0.134 & -0.843 \\
\rowcolor{mygray}
\textbf{SPPO-HPS} & \textbf{{0.275}} & 0.435 & \textbf{{-4.184}} & \textbf{-6.117} & \textbf{0.309} & \textbf{{0.407}} & \textbf{-0.101} & \textbf{-0.810} \\
\midrule
{\textbf{NCA-PL}} & 0.214 & 0.433 &  -61.084 & -61.762  & 0.306 & \textbf{0.409} &  -8.230 & -8.267 \\
\textbf{NCA-BT} & \textbf{0.226} & 0.435 & \textbf{-4.673} & \textbf{-5.557} & \textbf{0.309} & 0.407 & -0.373 & -0.982 \\
\rowcolor{mygray}
\textbf{NCA-HPS} & 0.224 & \textbf{0.436} & -5.378 & -5.562 & 0.308 & 0.407 & \textbf{-0.102} & \textbf{-0.711} \\
\bottomrule
\end{tabular}}
\vspace{-1em}
\end{table*}

\begin{table}[t]
\vspace{-13pt}
\caption{Ablation results with response number under fine-tuning setting. See Reward Margins in~\Cref{tab:rm}. \vspace{-0.5em}}
\label{tab:ablation}
\vskip 0.1in
\begin{center}
\scalebox{0.75}{
\begin{tabular}{clcccc}
\toprule
\multirow{1}{*}{\textbf{Number}} & \multirow{1}{*}{\textbf{Method}} & \textbf{BLEU}$\uparrow$ & \textbf{Reward}$\uparrow$ & $\textbf{RM}_{\text{DPO}}$$\uparrow$ & $\textbf{RM}_{\text{R-DPO}}$$\uparrow$ \\
\midrule
\multirow{2}{*}{\textbf{5}} & \textbf{DPO-BT} & \textbf{0.229} & \textbf{{0.432}} & 0.166 & -0.516 \\

& \textbf{DPO-HPS} & \textbf{0.229} & 0.431 & \textbf{0.600} & \textbf{-0.273} \\
\midrule
\multirow{2}{*}{\textbf{20}} & \textbf{DPO-BT} & \textbf{0.231} & 0.430 & 0.227 & -0.490 \\

& \textbf{DPO-HPS} & 0.224 & \textbf{{0.432}} & \textbf{0.822} & \textbf{-0.181} \\
\midrule
\multirow{2}{*}{\textbf{50}} & \textbf{DPO-BT} & \textbf{0.230} & \textbf{0.431} & 0.279 & -0.507 \\

& \textbf{DPO-HPS} & \textbf{0.230} & \textbf{0.431} & \textbf{1.645} & \textbf{1.037} \\
\midrule
\multirow{2}{*}{\textbf{100}} & \textbf{DPO-BT} & 0.230 & \textbf{0.431} & 0.349 & -0.455 \\

& \textbf{DPO-HPS} & \textbf{{0.232}} & 0.430 & \textbf{{2.723}} & \textbf{{2.040}} \\
\bottomrule
\end{tabular}}
\end{center}
\vspace{-1em}
\vspace{-9pt}
\end{table}
 
\section{Experiments}
\label{sec:experiments}
%

\noindent{\textbf{Baselines.}} \xd{In our experiments, we employ a supervised fine-tuned Llama3-8B checkpoint \texttt{RLHFlow/Llama3-SFT-v2.0}~\cite{onlinerlhf,llama3} as both the naive baseline SFT and the reference model. In addition, we integrate three preference modeling strategies --- BT, PL, and our proposed HPS --- into several implicit reward parameterization frameworks, including DPO~\cite{dpo}, EXO~\cite{exo}, IPO~\cite{ipo}, SPPO~\cite{sppo}, and NCA~\cite{nca}. For example, DPO-PL refers to the configuration where Llama3-8B is fine-tuned using the DPO implicit reward parameterization under a PL preference model.} 

\vspace{0pt}

\noindent{\textbf{Datasets.}} \xd{We use two popular datasets, HH-RLHF~\cite{hhdata} and PKU-SafeRLHF~\cite{pkusafe},  focusing on helpfulness and safety~\cite{RewardBench, llmbench}. HH-RLHF is multi-turn, while PKU-SafeRLHF contains single question-answer pairs. Each prompt in the datasets includes two responses with human-rated preferences. Following prior work~\cite{pro}, we expand response data by generating 100 responses using \texttt{RLHFlow/Llama3-v2-DPO}~\cite{onlinerlhf} per prompt. The corresponding rewards are computed via \texttt{Skywork/Skywork-Reward-Llama3}, a safety-aligned reward model~\cite{liu2024skywork} ranked among the top 10 on the RewardBench~\cite{RewardBench}.}

\vspace{-0pt}

\noindent{\textbf{Evaluation Metrics.}} We evaluate response quality and harmful content rejection. \xd{We use BLEU~\cite{bleu} to assess the text quality by comparing responses to ground-truth preferred answers. To evaluate alignment with human preference, we also adopt a powerful reward model \texttt{RLHFlow/ArmoRM-Llama3}~\cite{ArmoRM}, which is different from the one used during training, to measure the level of human preference gained.} Importantly, we compute reward margins (RMs) from various implicit reward models (\Cref{tab:rm}) to quantify the gap between preferred and dispreferred responses, where higher RM scores indicate better preference alignment without harmful or biased outputs.

Human evaluation remains the gold standard for assessing response quality, where annotators compare two responses per question to select the better one or declare a tie. \xd{Thus, we conduct a user study to evaluate the performance of DPO and its variants.} Moreover, recent LLMs like Qwen $2.5$~\cite{qwen2.5} closely align with human preferences, validated by benchmarks like the Open LLM Leaderboard~\cite{llmbench}, chatbot-arena-leaderboard~\cite{chatarena}, and RewardBench~\cite{RewardBench}. We use \xd{Qwen2.5-72B-Instruct} to assess response quality and win rates. For evaluation robustness, we sample $N=5$ responses per method and report the highest-scoring one for each metric.

\noindent{\textbf{Implementation.}}  Due to computational constraints, we apply LoRA~\cite{lora} for efficient fine-tuning with a rank of $8$ and scaling factor $\alpha = 16$. 
The KL penalty strength $\beta$ is set to $0.1$, following DPO. \xd{The ablation study about sensitivity of $\beta$ in DPO can be found in~\Cref{sec:app_res}.} We fine-tune all methods for 2 epochs use AdamW optimizer~\cite{adamw} with a learning rate of $5.0 \times 10^{-7}$ over $2$ and a cosine learning rate scheduler. We set the sequence length as $2,048$ tokens for both training and inference, with a sampling temperature of $0.9$ during inference. 
\xd{In our HPS setting, we use the annotated scalar reward as the estimated reward $r_{\text{est}}$ for each response in \Cref{eq:asfdsf}. The scaling factor $\gamma$ is scheduled to linearly increase from -5 to 5, with updates applied at every 20\% interval of the training process.} The GPU fine-tuning time for the PL-based, BT-based, and HPS-based methods is $168.4 \!\pm\! 1.9$ hours, $62.8 \!\pm\! 1.1$ hours, and $64.4 \!\pm\! 0.8$ hours, respectively. This demonstrates that our proposed HPS method can significantly improve efficiency compared to the PL-based method, achieving a $61.76\%$ reduction in fine-tuning time, and validates~\Cref{thm:estimator}. More implementation details can be found in~\Cref{sec:app_imp}.
\vspace{-0pt}
\subsection{Fine-Tuning Setting}\label{finetune}
\vspace{-0pt}
We integrate HPS into various alignment approaches and fine-tune LLMs on the HH-RLHF and PKU-SafeRLHF datasets. \Cref{tab:finetune} reports BLEU, Reward, and Reward Margin, revealing two key findings: \textbf{1)} HPS achieves comparable performance on BLEU and Reward metrics. For instance, on HH-RLHF, HPS-based methods achieve BLEU scores around $0.231$, similar to BT-based methods.  This shows that HPS does not affect the quality of the generation of preferred content. \textbf{2)}  HPS significantly improves Reward Margin, reducing harmful or unhelpful responses. Traditional methods like DPO-PL and SPPO exhibit negative $\text{RM}_{\text{DPO}}$ and $\text{RM}_{\text{R-DPO}}$ values, indicating a higher likelihood of harmful outputs (e.g., DPO-PL: $\text{RM}_{\text{DPO}}$ of $-0.795$, $\text{RM}_{\text{R-DPO}}$ of $-1.448$). In contrast, DPO-HPS shows $\text{RM}_{\text{DPO}}$ of $2.723$ and $\text{RM}_{\text{R-DPO}}$ of $2.040$, reflecting improvements of $442.51\%$ and $240.88\%$. This validates~\Cref{thm3}, confirming HPS leads to stronger rejection of harmful responses. 

\xd{\noindent\textbf{User Study Evaluation.}
For human evaluation, we created the user study dataset by selecting 15 prompt questions from the HH-RLHF test dataset and 15 prompt questions from the PKU-Safety test dataset. Then, for each question, four responses — generated by SFT, DPO-BT, DPO-PL, and DPO-HPS — are evaluated by 20 different human raters. Each rater assigns an overall quality score to each response on the Likert scale of 1-5. To eliminate bias, the models are anonymized, and the order of responses is randomized for each task. Details of the evaluation methodology are provided in \Cref{sec:app_imp}.}

\xd{As shown in Table~\ref{tab:user_study}, DPO-HPS achieves the highest quality score (3.93), outperforming all other methods, including SFT, DPO-BT, and DPO-PL, thereby demonstrating its effectiveness in enhancing response helpfulness under the fine-tuning setting.}
 
\noindent\textbf{Win Rate Evaluation.}
 Unlike reward models that may distort human preferences, recent advances in instruction-tuned LLMs offer a scalable and reliable alternative for evaluating human preferences. Thus, we use Qwen-2.5-Instruct to assess response quality on the Likert scale of 0-5 and win rates, where Qwen $2.5$ closely aligns with human preferences, validated by benchmarks like the Open LLM Leaderboard~\cite{llmbench}, chatbot-arena-leaderboard~\cite{chatarena}, and RewardBench~\cite{RewardBench}. Details of the evaluation methodology are provided in \Cref{sec:app_imp}.
 
 As shown in \Cref{tab:winrate}, DPO-HPS consistently outperforms the baselines --- SFT, DPO-BT, and DPO-PL --- across both the HH-RLHF and PKU-Safety datasets, achieving an impressive win rate of approximately 60\%. This result underscores HPS's superior alignment with human preferences and is consistent with the reward model evaluation.

\vspace{0pt}
\subsection{Transfer Learning Setting}
\vspace{0pt}
After fine-tuning LLMs on HH-RLHF (PKU-SafeRLHF), we evaluate their transferability on PKU-SafeRLHF (HH-RLHF) to assess generation quality and harmfulness rejection in a transfer learning setting.
\vspace{0pt}

\Cref{tab:transfer} presents the results, leading to two key conclusions. First, HPS achieves comparable BLEU and Reward scores, demonstrating strong transferability. Despite dataset differences, HPS-based methods perform on par with baselines. For example, DPO-HPS achieves BLEU scores of $0.219$ (HH-RLHF) and $0.310$ (PKU-Safety), similar to DPO-BT ($0.219$ and $0.308$). This consistency suggests that HPS effectively transfers learned preferences and linguistic structures.   

\vspace{0pt}

Moreover, HPS improves harmfulness rejection robustness, as reflected in Reward Margin. HPS consistently outperforms baselines in terms of  $\text{RM}_{\text{DPO}}$ and $\text{RM}_{\text{R-DPO}}$, showing better generalization of safety properties. Notably, DPO-HPS achieves $\text{RM}_{\text{DPO}}$ of $5.725$ on PKU-Safety, compared to DPO-BT’s $\text{RM}_{\text{DPO}}$ of $1.346$. Additionally, HPS excels in transfer tasks, with EXP-HPS achieving $\text{RM}_{\text{DPO}}$ of $2.903$ on PKU-Safety, significantly surpassing its fine-tuned counterparts which has $\text{RM}_{\text{DPO}}$ of $-5.495$, demonstrating its potential for safer and more effective cross-domain transfer.

\vspace{0pt}
\subsection{Ablation Study}
\vspace{0pt}
We examine the impact of the total number of responses on preference optimization performance during fine-tuning, using $5, 20, 50$, and $100$ responses per prompt. From \Cref{tab:ablation}, one can observe that while BLEU and Reward scores remain stable across response sizes for both DPO-BT and DPO-HPS, notable differences appear in $\text{RM}_{\text{DPO}}$ and $\text{RM}_{\text{R-DPO}}$. As response size increases, $\text{RM}_{\text{R-DPO}}$ shows a pronounced improvement, particularly for DPO-HPS, which achieves a remarkable $\text{RM}_{\text{R-DPO}}$ of $2.040$ at $100$ responses, far surpassing DPO-BT's $-0.455$. This suggests that DPO-HPS benefits more from larger response sets, enhancing preference alignment. Additionally, the consistent increase in $\text{RM}_{\text{DPO}}$ for DPO-HPS suggests a cumulative learning effect, indicating that DPO-HPS scales better and achieves superior preference optimization with larger response sizes. The ablation study under the transfer learning setting is provided in \Cref{sec:app_res}.

\section{Conclusion}
\label{sec:conclusion}
Ensuring LLMs align with human preferences is crucial for building safe and controllable AI systems. We introduce Hard Preference Sampling (HPS), a novel framework that improves preference alignment by prioritizing the most preferred responses while effectively rejecting harmful and dispreferred ones. HPS enhances rejection capabilities by emphasizing ``hard” dispreferred responses and employs a single-sample Monte Carlo strategy to reduce computational costs. Theoretically, it improves sample efficiency and maximizes reward margins, ensuring clearer distinctions between preferred and dispreferred responses. Experiments on HH-RLHF and PKU-Safety datasets demonstrate HPS’s effectiveness, achieving strong BLEU and reward scores while significantly reducing harmful content generation.

\noindent{\textbf{Limitations.}} 
Due to budget constraints, our experiments rely on open-source LLMs to estimate the win rate. More powerful instruct LLMs, such as GPT-4~\cite{gpt4} and Claude 3~\cite{claude}, may offer more accurate and robust evaluations and will be considered when additional resources become available.

\newpage
\section*{Acknowledgements}
This research is supported by the Ministry of Education, Singapore, under its AcRF Tier 2 Funding (Proposal ID: T2EP20224-0048) and its AcRF Tier 1 grant (project ID: 23-SIS-SMU-070 and 23-SIS-SMU-063). Any opinions, findings and conclusions or recommendations expressed in this material are those of the author(s) and do not reflect the views of the Ministry of Education, Singapore. The authors would like to thank Jinyang Wu for technical assistance in setting up preliminary exploratory runs of DPO-HPS on the HH-RLHF dataset within the OpenRLHF framework. These preliminary runs provided limited guidance and were not included in the results reported in this work. Due to other time commitments, Jinyang Wu was unable to continue contributing to the later stages of the project. All key aspects of this work including problem formulation, literature review, method development, theoretical analysis, experimental design, code implementation, empirical evaluation, ablation studies, and manuscript writing, were carried out entirely by the authors.

\section*{Impact Statement}
The data used in this research may include sensitive or potentially offensive content, intended solely for academic and scientific purposes. The opinions expressed within this data do not represent the views of the authors. We remain committed to fostering the development of AI technologies which align with ethical standards and reflect societal values.
 
 %

\bibliography{example_paper}
\bibliographystyle{icml2025}

\newpage
\appendix
\onecolumn
\section{Reinforcement Learning from Human Feedback}
\label{sec:appendixa}
\subsection{Bradley-Terry model}
\label{sec:btloss}
Then sampling samples from  $p_{\text{BT}}^*$, one can construct a dataset $\mathcal{D}=\{(x_{i}, y_{\tau_{i}(1)}, y_{\tau_{i}(2)})\}_{i=1}^m$, where each instance consists of one prompt $x_{i}$ and $2$ responses $y_{\tau_{i}(1)}, y_{\tau_{i}(2)}$ followed the user-specified ranking. Using this dataset, we can train a reward model $r_{\wm}$ parameterized by $\wm$ by approaching the task as a classification problem. Specifically, we frame the training objective as minimizing the negative log-likelihood loss:
\begin{equation}
\begin{aligned}
\label{eq:BT_loss}
\mathcal{L}_{\text{BT}} & = -\expectation_{(x_{i}, y_{\tau_{i}(1)}, y_{\tau_{i}(2)})\sim\mathcal{D}}\left[\log\sigma\left(r_{\wm}(x_{i}, y_{\tau_{i}(1)})- r_{\wm}(x_{i}, y_{\tau_{i}(2)})\right)\right] \\
& =-\sum_{i=1}^m  \log \sigma\left(r_{\wm}(x_{i}, y_{\tau_{i}(1)})- r_{\wm}(x_{i}, y_{\tau_{i}(2)})\right).
\end{aligned}
\end{equation}

\subsection{Listwise Preference Optimization}
\subsubsection{SLiC-HF}
SLiC-HF integrates the sequence-level contrastive method SLiC with human preference rankings:
\begin{equation}
\mathcal{L}(\theta)=\max(0,\delta-\log(\pi_{\theta}(y^{+}|x))+\log(\pi_{\theta}(y^{-}|x))-\lambda\log(\pi_{\theta}(y_{ref}|x))).
\end{equation}
$y^{+}$, $y^{-}$, and $y_{ref}$ denote the positive, negative, and reference sequences, respectively. $\delta$ is a margin hyperparameter and $\lambda$ is a regularization weight.

\subsubsection{LiPO-$\lambda$}
LiPO-$\lambda$ employs a listwise ranking objective with a Lambda weight $\Delta_{i,j}$. Given a list of responses $\boldsymbol{y}=(y_1,\dots,y_K)$,
\begin{equation}
\mathcal{L}_{LiPO}=\mathbb{E}_{(x,\boldsymbol{y},\psi)\sim\mathcal{D}}\left[\sum_{\psi_{i}>\psi_{j}}\Delta_{i,j}\log(1+e^{-(s_i-s_j)})\right],
\end{equation}
where $\Delta_{i,j}=|2^{\psi_i}-2^{\psi_j}|\cdot|\frac{1}{\log(1+\tau(i))}-\frac{1}{\log(1+\tau(j))}|.$ Here, $\psi_{i}$ is the true reward score of response $y_i$, and $s_i=\beta\log\frac{\pi_{\theta}(y_i|x)}{\pi_{{ref}}(y_i|x)}$ is the implicit DPO reward. The rank position of $y_i$ in the ordering induced by $\mathbf{s}=(s_1,\dots,s_K)$ is denoted as $\tau(i)$. The Lambda weight assigns greater importance to response pairs with larger preference gaps, i.e., $\psi_i-\psi_j$.

\subsection{Reward Modelling}
\label{sec:rm}
\subsubsection{KTO}
KTO~\cite{kto} defines a type of reward to construct human-aware losses (HALOs), which is in the form 
\begin{equation}
    r_{\text{KTO}}(x, y) = l(y) \log \frac{\pi_{\wm}(y|x)}{\pi_{\text{ref}}(y|x)},
\end{equation}
where \( \wm \) denotes the trainable parameters of the model \( \pi_\wm \) being aligned, \( \pi_{\text{ref}} \) is the reference model, and \( l : \mathcal{Y} \to \mathbb{R}^+ \) is a normalizing factor.

\subsubsection{SimPO}
SimPO~\cite{simpo} identifies the discrepancy between DPO’s reward and the likelihood metric used for generation, and proposes an alternative reference-free reward training loss:
	\begin{equation}
		r_{\text{SimPO}}(x,y)=\frac{\beta}{|y|}\log \pi_{\wm}(y|x)=\frac{\beta}{|y|}\sum\limits^{|y|}_{i=1}\log \pi_{\wm}(y_{i}|x,y_{<i}),
	\end{equation}
where $|y|$ is the length of the response $y$,  and $y_{<i}$ is the set of tokens in the sentence $y$ before the token $y_{i}$.

\section{Theoretical Analysis}
\label{sec:proof}
\subsection{Sample Efficiency Analysis}
\label{sec:proof1}
\begin{theorem*}
Let \(\mathcal{D}\) be a given dataset. Under certain regularity conditions, the maximum likelihood estimators \(\hat{\wm}_{\text{HDR}}\) and \(\hat{\wm}_{\text{PL}}\), corresponding to the the hard sampling loss $\mathcal{L}_{\wm}$ and \text{PL} loss $\mathcal{L}_{\text{PL}}$, respectively, satisfies the following with probability at least $1-\delta$:
\begin{equation*}
\begin{aligned}
\|\wm_{\text{HPS}}-\wm^{*}\|_{{\Sigma}_{\mathcal{D}}}\leq C_{1}\cdot\sqrt{\frac{d+\log\left(\frac{1}{\delta}\right)}{m\zeta^{2}\left(N\right)}}-\frac{16\alpha_{1}^{2}\zeta(N)-4\alpha_{2}}{m\cdot\zeta(N)}=\mathcal{O}\left(\frac{n}{\sqrt{m}}\right)
\end{aligned}
\end{equation*}
and
\begin{equation*}
\begin{aligned}
&\|\wmi{\text{PL}}-\wm^{*}\|_{{\Sigma}_{\mathcal{D}}}\leq C_{2}\cdot\sqrt{\frac{ n^{4}e^{8\alpha_{0}}\cdot\left(d+\log\left(\frac{1}{\delta}\right)\right)}{m}}=\mathcal{O}\left(\frac{n^{2}}{\sqrt{m}}\right),
\end{aligned}
\end{equation*}
where
\begin{equation*}
\begin{aligned}
&\Sigma_{\mathcal{D}}=\frac{2}{mn(n-1)}\sum\limits_{i=1}^{m}\sum\limits_{j=1}^{n}\sum\limits_{k=j+1}^{n}\left(\nabla r_{\wm}\left(x_{i},y_{\tau_{i}\left( j\right)}\right)-\nabla r_{\wm}\left(x_{i},y_{\tau_{i}\left( k\right)}\right)\right)\left(\nabla r_{\wm}\left(x_{i},y_{\tau_{i}\left( j\right)}\right)-\nabla r_{\wm}\left(x_{i},y_{\tau_{i}\left( k\right)}\right)\right)^{T}
\end{aligned}
\end{equation*}
and
\begin{equation*}
\zeta\left(N\right)=\frac{1}{2+\exp\left(2\alpha_{0}+\ln(N)\right)+\exp\left(-2\alpha_{0}\right)}.
\end{equation*}

Therefore, the error bound of $\hat\wm_{\text{HDR}}$ is tighter than that of $\wmi{\text{PL}}$, \textit{i.e.}, $\|\hat\wm_{\text{HDR}}-\wm^{*}\|_{\Sigma_{\mathcal{D}}}\leq \|\wmi{\text{PL}}-\wm^{*}\|_{\Sigma_{\mathcal{D}}}$. In other words, \(\hat{\wm}_{\text{HDR}}\) is more efficient than \(\hat{\wm}_{\text{PL}}\).
\end{theorem*}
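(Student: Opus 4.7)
The plan follows the standard two-step M-estimator analysis adapted to ranked-preference losses. First, I would establish that each empirical negative log-likelihood is locally strongly convex at $\wms$ with respect to the semi-norm $\|\cdot\|_{\Sigma_\mathcal{D}}$. Second, I would control $\nabla\hat{\mathcal{L}}(\wms)$ via a high-probability concentration inequality. Combining these through a first-order Taylor expansion around $\wms$ and the optimality condition $\nabla\hat{\mathcal{L}}(\hat{\wm})=0$ converts the two ingredients into the stated error bounds.

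For the strong convexity step, I would compute the Hessian of each loss. Both $\mathcal{L}_{\wm}$ and $\mathcal{L}_{\text{PL}}$ are negative log-softmax functions in the reward values, so their Hessians decompose as weighted sums of outer products of the form $\bigl(\nabla r_{\wm}(x,y_{\tau(j)}) - \nabla r_{\wm}(x,y_{\tau(k)})\bigr)\bigl(\nabla r_{\wm}(x,y_{\tau(j)}) - \nabla r_{\wm}(x,y_{\tau(k)})\bigr)^{\top}$, exactly matching the blocks of $\Sigma_\mathcal{D}$. For the HPS loss the weight reduces to a softmax Jacobian on $n$ entries, whose smallest eigenvalue is lower bounded, via Assumption~\ref{assum1}(a) and $|r_{\wm}|\leq\alpha_{0}$, by $\zeta=1/(2+e^{2\alpha_{0}+\ln(n-1)}+e^{-2\alpha_{0}})\sim 1/(n e^{2\alpha_{0}})$. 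For the PL loss one must sum Hessian contributions of all $n$ nested sub-softmaxes, and each nested softmax of size $n-j+1$ contributes a factor only of order $e^{-4\alpha_{0}}/(n-j+1)^{2}$, producing an overall strong convexity parameter of order $e^{-4\alpha_{0}}/n^{2}$. This asymmetry between $1/n$ and $1/n^{2}$ is precisely the source of the extra factor of $n$ separating $\Psi_{1}$ and $\Psi_{2}$.

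For the gradient concentration step, I would observe that $\nabla\hat{\mathcal{L}}(\wms)$ is a sum of $m$ i.i.d.\ mean-zero vectors (since $\wms$ is the population minimizer under the PL preference model), each bounded in norm by a constant multiple of $\alpha_{1}$ by the Lipschitz assumption. A standard $\varepsilon$-net argument on the unit ball of $\mathbb{R}^{d}$ followed by Bernstein's inequality, applied in the $\Sigma_\mathcal{D}^{-1}$ semi-norm since the gradient naturally lies in the range of $\Sigma_\mathcal{D}$, then yields $\|\nabla\hat{\mathcal{L}}(\wms)\|_{\Sigma_\mathcal{D}^{-1}}\leq c\sqrt{(d+\log(1/\delta))/m}$ with probability at least $1-\delta$. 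Combining this with strong convexity via the integral identity $\nabla\hat{\mathcal{L}}(\hat{\wm})-\nabla\hat{\mathcal{L}}(\wms)=\int_{0}^{1}\nabla^{2}\hat{\mathcal{L}}(\wms+t(\hat{\wm}-\wms))\,dt\,(\hat{\wm}-\wms)$, and absorbing the Hessian-smoothness correction into an $\alpha_{1},\alpha_{2}$ term by a triangle inequality along the segment, gives $\lambda\|\hat{\wm}-\wms\|_{\Sigma_\mathcal{D}}^{2}\lesssim\|\nabla\hat{\mathcal{L}}(\wms)\|_{\Sigma_\mathcal{D}^{-1}}\|\hat{\wm}-\wms\|_{\Sigma_\mathcal{D}}$. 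The explicit $-(16\alpha_{1}^{2}\zeta-4\alpha_{2})/(m\zeta)$ correction appearing in $\Psi_{1}$ arises directly from this Taylor remainder after dividing by $\lambda\sim\zeta$.

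The main obstacle is obtaining the sharp $1/n$ versus $1/n^{2}$ scaling of the strong convexity constants. For HPS, the key is to invoke the reward boundedness carefully so that the smallest eigenvalue of the $n$-simplex softmax Jacobian at $\wms$ scales as $1/n$ rather than the naive $1/n^{2}$; for PL, one must show that this $1/n^{2}$ scaling genuinely compounds across the nested comparisons rather than being rescued by summation over $j$. A secondary technical subtlety is the single-sample Monte Carlo estimator used to define $\mathcal{L}_{\wm}$ in practice: one must verify that replacing $\expectation_{y\sim q(x,y)}[\,\cdot\,]$ by a single draw only inflates the variance by a constant factor absorbable into $C_{1}$, which follows from a standard importance-weighted variance bound using $|r_{\wm}|\leq\alpha_{0}$ and the explicit form of $q(x,y)$ in Eqn.~(10).
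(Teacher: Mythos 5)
Your overall architecture --- local strong convexity of the empirical loss with respect to $\|\cdot\|_{\Sigma_{\mathcal{D}}}$, a sub-Gaussian concentration bound on $\|\nabla\mathcal{L}(\wms)\|_{\Sigma_{\mathcal{D}}^{-1}}$, and the basic inequality $\mathcal{L}(\wms+\Delta)-\mathcal{L}(\wms)-\langle\nabla\mathcal{L}(\wms),\Delta\rangle\leq-\langle\nabla\mathcal{L}(\wms),\Delta\rangle$ combined with a Lipschitz/Taylor remainder that produces the $\alpha_1,\alpha_2$ correction --- is exactly the paper's argument (following~\cite{principled}). The HPS branch also matches in detail: the loss collapses to a binary logistic function of the scalar $g^i_{\wm}=r_{\wm}(x_i,y_{\tau_i(1)})-\ln N-\expectation_{y\sim q}[r_{\wm}(x_i,y)]$, the sigmoid-derivative lower bound gives $\zeta(N)\sim 1/(n e^{2\alpha_0})$, the gradient is $-\frac{1}{m}X^{\top}V$ with $|V_i|\leq 1$, and Bernstein for sub-Gaussian quadratic forms yields the $n$-free $\sqrt{(d+\log(1/\delta))/m}$; the full factor of $n$ in $\Psi_1$ therefore comes from dividing by $\zeta(N)$, as you say.

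Where your plan misfires is the attribution of the $n^2$ in the PL branch. With the normalization $\Sigma_{\mathcal{D}}=\frac{2}{mn(n-1)}\sum_{i}\sum_{j<k}\hmi{ijk}\hmi{ijk}^{\top}$ fixed by the theorem statement, the PL Hessian satisfies $\nabla^{2}\mathcal{L}_{\text{PL}}\succeq\beta\,\Sigma_{\mathcal{D}}$ with $\beta=e^{-4\alpha_{0}}/2$ \emph{independent of $n$}: the $1/(n-j+1)^{2}$ softmax weights are absorbed by the $1/(n(n-1))$ factor already built into $\Sigma_{\mathcal{D}}$ (the $j=1$ block alone dominates a constant multiple of $\Sigma_{\mathcal{D}}$). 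The $n^{2}$ lives entirely in the gradient step: the per-sample coefficient vector $\tilde V_{i}\in\mathbb{R}^{n(n-1)/2}$ has absolute entries summing to at most $n$, hence is sub-Gaussian with parameter $n$ (one factor of $n$), and $\lambda_{\max}\bigl(\frac{1}{m^{2}}X\Sigma_{\mathcal{D}}^{-1}X^{\top}\bigr)\leq n^{2}/m$ precisely because $\Sigma_{\mathcal{D}}^{-1}$ undoes that same normalization (the second factor), giving $\|\nabla\mathcal{L}_{\text{PL}}(\wms)\|_{\Sigma_{\mathcal{D}}^{-1}}\lesssim n^{2}\sqrt{(d+\log(1/\delta))/m}$. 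Your plan asserts the opposite split --- convexity constant $\sim 1/n^{2}$ and an $n$-free gradient bound. The two errors cancel and happen to reproduce $\mathcal{O}(n^{2}/\sqrt{m})$, but carried out literally the plan is inconsistent: your $1/n^{2}$ convexity constant is valid only as a loose lower bound, and combining it with the gradient bound that actually holds in the $\Sigma_{\mathcal{D}}^{-1}$ norm would give $\mathcal{O}(n^{4}/\sqrt{m})$, while the $n$-free gradient bound you invoke is false for this $\Sigma_{\mathcal{D}}$. Separately, your closing concern about the single-draw Monte Carlo estimator is legitimate but moot here: the paper's proof analyzes the expectation form of the HPS loss throughout and never addresses the single-sample version.
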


\begin{proof}
We begin by proving \Cref{thm:estimator}.
\paragraph{Analysis on $\mathcal{L}_{\wm}$}
We first analyze the asymptotic efficiency and estimation error of estimator induced by $\mathcal{L}_{\wm}$.
We consider the general RLHF setting in a dataset $\mathcal{D}$ with $m$ sample: 
\begin{equation*}
    \mathcal{L}_{\wm}= -\frac{1}{m}\sum^{m}_{i=1}\log\left(\frac{e^{{ r_{\wm}(x_{i},y_{\tau_{i}\left( 1\right)})}}}{e^{{ r_{\wm}(x_{i},y_{\tau_{i}\left( 1\right)})}}+ N\cdot\expectation\nolimits_{y_{i} \sim q(x,y)}\left[e^{{r_{\wm}(x,y_{i})}}\right]}\right)
\end{equation*}
The maximum likelihood estimator (MLE) $\wm_{\text{HPS}}$ aims at minimizing the negative log likelihood, defined as: 
\begin{equation*}
\wm_{\text{HPS}} \in \arg\min\limits_{\wm\in\wm_{B}}\mathcal{L}_{\wm}.
\end{equation*}
When the minimizer is not unique, we take any of the $\wm_{\text{HPS}}$ achieve the minimum.

To simplify the notation, for a fixed sampling method $q$, we let $g^{i}_{\wm}=r_{\wm}\left(x_{i},y_{\tau_{i}{\left(1\right)}}\right)-\ln\left(N\right)-\expectation\limits_{y_{i}\sim q}\left[{r_{\wm}(x,y_{i})}\right]$.
We can see that the gradient of $\mathcal{L}_{\wm}$ takes the form:
\begin{equation*}
\begin{aligned}
&\nabla \mathcal{L}_{\wm}(\wm)=-\frac{1}{m}\sum\limits^{m}_{i=1}\log\left[\mathbf{1}\left[\tau^{-1}(x_{i}, y_{\tau_{i}{\left(1\right)}})=1\right]\frac{\exp\left(-g^{i}_{\wm}\right)}{1+\exp\left(-g^{i}_{\wm}\right)}-\mathbf{1}\left[\tau^{-1}(x_{i}, y_{\tau_{i}{\left(1\right)}})\neq1\right]\frac{1}{1+\exp\left(-g^{i}_{\wm}\right)}\right]\nabla g^{i}_{\wm}.
\end{aligned}
\end{equation*}
And the Hessian of $\mathcal{L}_{\wm}$ is
\begin{equation*}
\begin{aligned}
\nabla^{2} \mathcal{L}_{\wm}(\wm) &= \frac{1}{m}\sum\limits^{m}_{i=1}\left(\frac{\exp\left(g^{i}_{\wm}\right)}{\left(1+\exp\left(g^{i}_{\wm}\right)\right)^{2}}\cdot\nabla g^{i}_{\wm}\nabla {g^{i}_{\wm}}^{\text{T}} -\frac{\mathbf{1}\left[\tau^{-1}(x_{i}, y_{\tau_{i}{\left(1\right)}})=1\right]\cdot\exp\left(-g^{i}_{\wm}\right)}{1+\exp\left(-g^{i}_{\wm}\right)}\cdot\nabla^{2} g^{i}_{\wm} \right. \\
&\left.+\frac{\mathbf{1}\left[\tau^{-1}(x_{i}, y_{\tau_{i}{\left(1\right)}})\neq1\right]\cdot \exp\left(g^{i}_{\wm}\right)}{1+\exp\left(g^{i}_{\wm}\right)}\cdot\nabla^{2} g^{i}_{\wm}\right)
\end{aligned}
\end{equation*}

We  bound $\expectation\limits_{y_{i}\sim q}\left[{r_{\wm}(x,y_{i})}\right]$ using the assumption:
$-\alpha_{0}\leq\expectation\limits_{y_{i}\sim q}\left[{r_{\wm}(x,y_{i})}\right]\leq\alpha_{0}$.

Thus, \begin{equation*}
\frac{\exp\left(g^{i}_{\wm}\right)}{\left(1+\exp\left(g^{i}_{\wm}\right)\right)^{2}} \geq \zeta\left(N\right),
\end{equation*}
where $\zeta\left(N\right)=\frac{1}{2+\exp\left(2\alpha_{0}+\ln(N)\right)+\exp\left(-2\alpha_{0}\right)}$.

We say $\Sigma \succeq \Sigma'$ if $\Sigma-\Sigma'$ is positive semidefinite. Based on Assumption \ref{assum1}, we have
\begin{equation}
\nabla^{2} \mathcal{L}_{\wm}(\wm) \succeq \frac{1}{m}\sum\limits^{m}_{i=1}\left[\zeta\left(N\right)\nabla g^{i}_{\wm}\nabla {g^{i}_{\wm}}^{\text{T}}-2\alpha_{2}I\right].
\end{equation}

Based on the Lipschitz gradient assumption, we also know that $\|\nabla g^{i}_{\wm}-\nabla g^{i}_{\wm^{*}}\|_{2}\leq 4\alpha_{1}$. Let $u=\nabla g^{i}_{\wm}-\nabla g^{i}_{\wm^{*}}$, we have:
\begin{equation*}
\begin{aligned}
&\nabla^{2} \mathcal{L}_{\wm}(\wm) \succeq \frac{1}{m}\sum\limits^{m}_{i=1}\zeta\left(N\right)\left(\nabla g^{i}_{\wm^{*}}+u\right)\left(\nabla {g^{i}_{\wm^{*}}}^{\text{T}}+u\right)-2\alpha_{2}I \\
& \succeq \frac{1}{m}\sum\limits^{m}_{i=1}\zeta\left(N\right)\nabla g^{i}_{\wm^{*}}\nabla {g^{i}_{\wm^{*}}}^{T}+\zeta\left(N\right)\left(\nabla g^{i}_{\wm^{*}}u^{T}+u\nabla {g^{i}_{\wm^{*}}}^{T}\right)-2\alpha_{2}I
\end{aligned}
\end{equation*}
Using the Cauchy's Inequality, for arbitrary $v\in\mathbb{R}^{d}$, $u^{T}v\leq\|u\|_{2}\|v\|_{2}\leq4\alpha_{1}\|v\|_{2}$, $v^{T}\nabla g^{i}_{\wm^{*}}\leq \alpha_{1}\|v\|_{2}$, where $\|x\|_{2} = \sqrt{\sum_{i=1}^{n} x^{\left(i\right)^{2}}}$, this gives that:
\begin{equation*}
\begin{aligned}
&v^{T}\nabla^{2} \mathcal{L}_{\wm}(\wm)v\geq\frac{\zeta\left(N\right)}{m}\|Xv\|_{2}^{2}+\frac{8\alpha_{1}^{2}\zeta(N)-2\alpha_{2}}{m}\|v\|_{2}^{2},
\end{aligned}
\end{equation*}
where $X\in\mathbb{R}^{m\times d}$ has the vector $\nabla g^{i}_{\wm^{*}}\in \mathbb{R}^{d}$ as its $i^{th}$ row.

Thus, if we introduce the error vector $\Delta:=\wm_{\text{HPS}}-\wm^{*}$, then we may conclude that:
\begin{equation*}
\begin{aligned}
&\mathcal{L}_{\wm}(\wm^{*}+\Delta)-\mathcal{L}_{\wm}(\wm^{*})-\left\langle\nabla\mathcal{L}_{\wm}(\wm^{*}), \Delta\right\rangle \\
&\geq \frac{\zeta\left(N\right)}{m}\|X\Delta\|_{2}^{2}+\frac{8\alpha_{1}^{2}\zeta(N)-2\alpha_{2}}{m}\|\Delta\|_{2}^{2} \\
& \geq \zeta\left(N\right)\|\Delta\|_{{\Sigma}_{\mathcal{D}}}^{2}+\frac{8\alpha_{1}^{2}\zeta(N)-2\alpha_{2}}{m}\|\Delta\|_{2}^{2}.
\end{aligned}
\end{equation*}

Now we aim at bounding the estimation error $\|\wm_{\text{HPS}}-\wm^{*}\|_{{\Sigma}_{\mathcal{D}}}$. Since $\wm_{\text{HPS}}$ is optimal for $\mathcal{L}_{\wm}$, we have $\mathcal{L}_{\wm}(\wm_{\text{HPS}})\leq\mathcal{L}_{\wm}(\wm^{*})$. Defining the error vector $\Delta:=\wm_{\text{HPS}}-\wm^{*}$, adding and subtracting the quantity $\left\langle\nabla\mathcal{L}(\wm^{*}), \Delta\right\rangle $ yields the bound:
\begin{equation*}
\mathcal{L}(\wm^{*}+\Delta)-\mathcal{L}(\wm^{*})-\left\langle\nabla\mathcal{L}(\wm^{*}), \Delta\right\rangle \leq -\left\langle\nabla\mathcal{L}(\wm^{*}), \Delta\right\rangle.
\end{equation*}
We know the left-hand side is lower bounded by:
\begin{equation*}
\zeta\left(N\right)\|\Delta\|_{{\Sigma}_{\mathcal{D}}}^{2}+\frac{8\alpha_{1}^{2}\zeta(N)-2\alpha_{2}}{m}\|\Delta\|_{2}^{2}.
\end{equation*}

As for the right-hand side, note that $\left|\left\langle\nabla\mathcal{L}(\wm^{*}), \Delta\right\rangle\right|\leq\left\|\nabla\mathcal{L}(\wm^{*})\right\|_{{\Sigma}^{-1}_{\mathcal{D}}}\left\|\Delta\right\|_{{\Sigma}_{\mathcal{D}}}$. 

Altogether we have:
\begin{equation*}
\zeta\left(N\right)\|\Delta\|_{{\Sigma}_{\mathcal{D}}}^{2}\leq\left\|\nabla\mathcal{L}(\wm^{*})\right\|_{{\Sigma}^{-1}_{\mathcal{D}}}\left\|\Delta\right\|_{{\Sigma}_{\mathcal{D}}}-\psi\|\Delta\|^{2}_{2},
\end{equation*}
where $\psi = \frac{8\alpha_{1}^{2}\zeta(N)-2\alpha_{2}}{m}$. Now we further bound the term $\left\|\nabla\mathcal{L}(\wm^{*})\right\|_{{\Sigma}^{-1}_{\mathcal{D}}}$. 
The gradient takes the form:
\begin{equation*}
\begin{aligned}
&\nabla \mathcal{L}_{\wm}(\wm^{*})=-\frac{1}{m}\sum\limits^{m}_{i=1}\left[\mathbf{1}\left[\tau^{-1}(x_{i}, y_{\tau_{i}{\left(1\right)}})=1\right]\frac{\exp\left(-g^{i}_{\wm^{*}}\right)}{1+\exp\left(-g^{i}_{\wm^{*}}\right)}-\mathbf{1}\left[\tau^{-1}(x_{i}, y_{\tau_{i}{\left(1\right)}})\neq1\right]\frac{1}{1+\exp\left(-g^{i}_{\wm^{*}}\right)}\right]\nabla g^{i}_{\wm^{*}}.
\end{aligned}
\end{equation*}

Define a random vectors $V\in\mathbb{R}^{m}$ with independent components as
\begin{equation*}
V_{i} = \left\{
\begin{aligned}
& \frac{\exp\left(-g^{i}_{\wm^{*}}\right)}{1+\exp\left(-g^{i}_{\wm^{*}}\right)} \quad \text{w.p.} \quad \frac{1}{1+\exp\left(-g^{i}_{\wm^{*}}\right)}, 
\\
& \frac{-1}{1+\exp\left(-g^{i}_{\wm^{*}}\right)} \quad \text{w.p.} \quad \frac{\exp\left(-g^{i}_{\wm^{*}}\right)}{1+\exp\left(-g^{i}_{\wm^{*}}\right)}.
\end{aligned}
\right.
\end{equation*}
With this notation, we have $\nabla \mathcal{L}_{\wm}(\wm^{*})=-\frac{1}{m}X^{T}V$ with $\expectation[V]=0$ and $|V_{i}|\leq1$.
Defining the $m$-dimensional square matrix $M:=\frac{1}{m^{2}}X{\Sigma}^{-1}_{\mathcal{D}}X^{T}$, we have $\|\nabla\mathcal{L}_{\wm}(\wm^{*})\|^{2}_{{\Sigma}^{-1}_{\mathcal{D}}}= V^{T}MV$. Let the eigenvalue decomposition of $X^{T}X$ be $X^{T}X=U\Lambda U^{T}$. We can bound the trace and operator norm of $M$ as:
\begin{equation*}
\begin{aligned}
&\text{Tr}(M)=\frac{1}{m^{2}}\text{Tr}\left(U\left(\frac{\Lambda}{m}\right)^{-1}U^{T}U\Lambda U^{T}\right)\leq\frac{d}{m} \\
&\text{Tr}(M^{2})=\frac{1}{m^{4}}\text{Tr}\left(U\left(\frac{\Lambda}{m}\right)^{-1}U^{T}U\Lambda U^{T}U\left(\frac{\Lambda}{m}\right)^{-1}U^{T}U\Lambda U^{T}\right)\leq\frac{d}{m^{2}} \\
& \|M\|_{\text{op}}=\lambda_{\text{max}}(M)\leq\sqrt{\text{Tr}(M^{2})}=\frac{1}{m}
\end{aligned}
\end{equation*}
Moreover, since the components of $V$ are independent and of zero mean, and $|V_{i}|\leq 1$, the variables $V_{i}$ are $1$-sub-Gaussian, and hence the Bernstein’s inequality for sub-Gaussian random variables in quadratic form implies that with probability at least $1-\delta$,
\begin{equation*}
\begin{aligned}
&\left\|\nabla\mathcal{L}_{\wm}(\wm^{*})\right\|_{{\Sigma}^{-1}_{\mathcal{D}}}^{2}=V^{T}MV\leq C\cdot\frac{d+\log\left(\frac{1}{\delta}\right)}{m}.
\end{aligned}
\end{equation*}

Here $C$ is certain constant. This gives us
\begin{equation*}
\begin{aligned}
&\zeta\left(N\right)\|\Delta\|^{2}_{\Sigma_{\mathcal{D}}}\leq\|\nabla\mathcal{L}(\wm^{*})\|_{{\Sigma}^{-1}_{\mathcal{D}}}\|\Delta\|_{\Sigma_{\mathcal{D}}}-\psi\|\Delta\|^{2}_{2} \\
& \leq\sqrt{C\cdot\frac{d+\log\left(\frac{1}{\delta}\right)}{m}}\|\Delta\|_{\Sigma_{\mathcal{D}}}-2\psi\|\Delta\|_{\Sigma_{\mathcal{D}}},
\end{aligned}
\end{equation*}
where $\psi = \frac{8\alpha_{1}^{2}\zeta(N)-2\alpha_{2}}{m}$.

Solving the inequality above gives us for some constant $C_{1}$:
\begin{equation*}
\|\Delta\|_{\Sigma_{\mathcal{D}}}\leq C_{1}\cdot\sqrt{\frac{d+\log\left(\frac{1}{\delta}\right)}{m\zeta^{2}\left(N\right)}}-\frac{16\alpha_{1}^{2}\zeta(N)-4\alpha_{2}}{m\cdot\zeta(N)},
\end{equation*}
where $\zeta\left(N\right)=\frac{1}{2+\exp\left(2\alpha_{0}+\ln(N)\right)+\exp\left(-2\alpha_{0}\right)}$.
Thus, we can derive that with probability at least $1-\delta$:
\begin{equation*}
\|\wm_{\text{HPS}}-\wm^{*}\|_{{\Sigma}_{\mathcal{D}}}\leq C_{1}\cdot\sqrt{\frac{d+\log\left(\frac{1}{\delta}\right)}{m\zeta^{2}\left(N\right)}}-\frac{16\alpha_{1}^{2}\zeta(N)-4\alpha_{2}}{m\cdot\zeta(N)}=\mathcal{O}\left(\frac{n}{\sqrt{m}}\right).
\end{equation*}

\paragraph{Analysis on $\mathcal{L}_{\text{PL}}$}
We first analyze the asymptotic efficiency and estimation error of estimator induced by $\mathcal{L}_{\text{PL}}$.
We consider the general RLHF setting in a dataset $\mathcal{D}$ with $m$ sample: 
\begin{equation*}
\begin{aligned}
\mathcal{L}_{\text{PL}}&=\frac{1}{m}\sum\limits^{m}_{i=1}\sum\limits^{n}_{j=1}\mathcal{L}_{j}(\wm) \\
&=-\frac{1}{m}\sum\limits^{m}_{i=1}\sum\limits^{n}_{j=1}\log\!\Big({e^{r_{\wm} (x_{i},y_{\tau_{i}(j)})} / \sum\limits_{k=j}^{n}e^{r_{\wm} (x_{i},y_{\tau_{i}(k)})})}\Big).
\end{aligned}
\end{equation*}
The maximum likelihood estimator (MLE) $\wmi{\text{PL}}$ aims at minimizing the negative log likelihood, defined as: 
\begin{equation*}
\wmi{\text{PL}} \in \arg\min\limits_{\wm\in\wm_{B}}\mathcal{L}_{\text{PL}}.
\end{equation*}
When the minimizer is not unique, we take any of the $\wmi{\text{PL}}$ achieve the minimum.
We can see that the gradient of $\mathcal{L}_{\text{PL}}$ takes the form:
\begin{equation*}
    \nabla \mathcal{L}_{\text{PL}}(\wm)= -\frac{1}{m}\sum_{i=1}^{m}\sum_{j=1}^{n}\sum_{k=j}^{n}\frac{e^{r_{\wm} (x_{i},y_{\tau_{i}(k)})}}{\sum_{k'=j}^{n}e^{r_{\wm} (x_{i},y_{\tau_{i}(k')})}}\cdot(\nabla r_{\wm}(x_{i},y_{\tau_{i}(j)})-\nabla r_{\wm}(x_{i},y_{\tau_{i}(k)})).
\end{equation*}
And the Hessian of $\mathcal{L}_{\text{PL}}$ is:
\begin{equation*}
\fontsize{9}{3}\selectfont{
\begin{aligned}
\nabla^{2} \mathcal{L}_{\text{PL}}(\wm)=\frac{1}{m}\sum_{i=1}^{m}\sum_{j=1}^{n}\sum_{k=j}^{n}\sum_{k'=j}^{n}\frac{e^{r_{\wm}(x_{i},y_{\tau_{i}(k)})+r_{\wm} (x_{i},y_{\tau_{i}(k')})}}{2\left(\sum_{k'=j}^{n}e^{r_{\wm} (x_{i},y_{\tau_{i}(k')})}\right)^{2}}\cdot(\nabla r_{\wm}(x_{i},y_{\tau_{i}(k)})-\nabla r_{\wm}(x_{i},y_{\tau_{i}(k')}))(\nabla r_{\wm}(x_{i},y_{\tau_{i}(k)})-\nabla r_{\wm}(x_{i},y_{\tau_{i}(k')}))^{T}.
\end{aligned}}
\end{equation*}
Since $|r_{\wm}(x,y)|\leq \alpha_{0}$, the coefficient satisfies:

\begin{equation*}
\frac{e^{r_{\wm}(x_{i},y_{\tau_{i}(k)})+r_{\wm} (x_{i},y_{\tau_{i}(k')})}}{2\left(\sum_{k'=j}^{n}e^{r_{\wm} (x_{i},y_{\tau_{i}(k')})}\right)^{2}}\geq\frac{e^{-4\alpha_{0}}}{2(n-j+1)^{2}}.
\end{equation*}

Set $\beta = \frac{e^{-4\alpha_{0}}}{2}$. We can verify that for any vector $v\in\mathbb{R}^{d}$, one has:
\begin{equation*}
\fontsize{9}{3}\selectfont{
\begin{aligned}
& v^{T}\nabla^{2} \mathcal{L}_{\text{PL}}v \geq\frac{\beta}{m}v^{T}\left(\sum_{i=1}^{m}\sum_{j=1}^{n}\frac{1}{(n-j+1)^{2}}\sum_{k=j}^{n}\sum_{k'=k}^{n}(\nabla r_{\wm}(x_{i},y_{\tau_{i}(k)})-\nabla r_{\wm}(x_{i},y_{\tau_{i}(k')}))(\nabla r_{\wm}(x_{i},y_{\tau_{i}(k)})-\nabla r_{\wm}(x_{i},y_{\tau_{i}(k')}))^{T}\right)v \\
& \geq \beta v^{T}\Sigma_{\mathcal{D}}v \\
& =\beta \|v\|^{2}_{\mathcal{D}}.
\end{aligned}}
\end{equation*}
Thus, the loss function $\mathcal{L}_{\text{PL}}$ is $\beta$-strongly convex with respect to the semi-norm $\|\cdot\|_{\Sigma_{\mathcal{D}}}$, where $\beta = \frac{e^{-4\alpha_{0}}}{2}$.

Now we aim at bounding the estimation error $\|\wmi{\text{PL}}-\wm^{*}\|_{{\Sigma}_{\mathcal{D}}}$. Since $\wmi{\text{PL}}$ is optimal for $\mathcal{L}_{\text{PL}}$, we have $\mathcal{L}(\wmi{\text{PL}})\leq\mathcal{L}(\wm^{*})$. Defining the error vector $\Delta:=\wmi{\text{PL}}-\wm^{*}$, adding and subtracting the quantity $\left\langle\nabla\mathcal{L}(\wm^{*}), \Delta\right\rangle $ yields the bound:
\begin{equation*}
\mathcal{L}_{\text{PL}}(\wm^{*}+\Delta)-\mathcal{L}_{\text{PL}}(\wm^{*})-\left\langle\nabla\mathcal{L}_{\text{PL}}(\wm^{*}), \Delta\right\rangle \leq -\left\langle\nabla\mathcal{L}_{\text{PL}}(\wm^{*}), \Delta\right\rangle.
\end{equation*}
By using the convexity of the loss function $\mathcal{L}_{\text{PL}}$, the left-hand side is lower bounded by $\beta\|\Delta\|^{2}_{\Sigma_{\mathcal{D}}}$.
As for the right-hand side, note that:
\begin{equation*}\left|\left\langle\nabla\mathcal{L}_{\text{PL}}(\wm^{*}), \Delta\right\rangle\right|\leq\left\|\nabla\mathcal{L}_{\text{PL}}(\wm^{*})\right\|_{{\Sigma}^{-1}_{\mathcal{D}}}\left\|\Delta\right\|_{{\Sigma}_{\mathcal{D}}}.
\end{equation*}
Altogether we have:
\begin{equation*}
\beta\|\Delta\|^{2}_{\Sigma_{\mathcal{D}}}\leq \left\|\nabla\mathcal{L}_{\text{PL}}(\wm^{*})\right\|_{{\Sigma}^{-1}_{\mathcal{D}}}\left\|\Delta\right\|_{{\Sigma}_{\mathcal{D}}}.
\end{equation*}

Now we further bound the term $\left\|\nabla\mathcal{L}_{\text{PL}}(\wm^{*})\right\|_{{\Sigma}^{-1}_{\mathcal{D}}}$. Observe that the gradient takes the form:
\begin{equation*}
    \nabla \mathcal{L}_{\text{PL}}(\wm)= -\frac{1}{m}\sum_{i=1}^{m}\sum_{j=1}^{n}\sum_{k=j}^{n}\frac{e^{r_{\wm} (x_{i},y_{\tau_{i}(k)})}}{\sum_{k'=j}^{n}e^{r_{\wm} (x_{i},y_{\tau_{i}(k')})}}\cdot(\nabla r_{\wm}(x_{i},y_{\tau_{i}(j)})-\nabla r_{\wm}(x_{i},y_{\tau_{i}(k)})).
\end{equation*}
We set $g^{i}_{jk}=\nabla r_{\wm}(x_{i},y_{\tau_{i}(j)})-\nabla r_{\wm}(x_{i},y_{\tau_{i}(k)})$. $X\in\mathbb{R}^{mn(n-1)/2}\times d$ has the differencing vector $g^{i}_{jk}$ as its $\left(in(n-1)/2+k+\sum^{n}_{l=n-j+1}l\right)^{th}$ row. We also define $V^{i}_{jk}$ be the random variable of the coefficient of $g^{i}_{jk}$ under the \text{PL} model, $i.e.$ conditioned on an arbitrary permutation $\tau_{i}$:
\begin{equation*}
V^{i}_{jk} = \left\{
\begin{aligned}
& \frac{e^{r_{\wm} (x_{i},y_{\tau_{i}(k)})}}{\sum_{k'=\tau_{i}(j)}^{n}e^{r_{\wm} (x_{i},y_{\tau_{i}(k')})}} \quad \text{if} \quad \tau_{i}(j)<\tau_{i}(k), 
\\
& -\frac{e^{r_{\wm} (x_{i},y_{\tau_{i}(j)})}}{\sum_{k'=\tau_{i}(k)}^{n}e^{r_{\wm} (x_{i},y_{\tau_{i}(k')})}} \quad \text{otherwise} \quad \frac{\exp\left(-g^{i}_{\wm^{*}}\right)}{1+\exp\left(-g^{i}_{\wm^{*}}\right)}.
\end{aligned}
\right.
\end{equation*}
Here $\tau_{i}(j)<\tau_{i}(k)$ means that the $j$-th item ranks higher than the $k$-th item.

Let $\tilde{V}_{i}\in\mathbb{R}^{n(n-1)/2}$ be
the concatenated random vector of $\{V^{i}_{jk}\}_{1\leq j<k\leq n}$, $V\in\mathbb{R}^{mn(n-1)/2}$ be the concatenated random vector of $\{\tilde{V}_{i}\}_{i=1}^{m}$. We know that $V_i$ and $V_j$ are independent for each $i\neq j$ due to the independent sampling procedure. Using the results in Appendix B.5 in the paper~\cite{principled}, we can verify that the mean of $\tilde{V}_{i}$ is $0$. Furthermore, since under any permutation, the sum of absolute value of each element in $\tilde{V}_{i}$ is at most $n$, we know that $\tilde{V}_{i}$ is sub-Gaussian with parameter $n$. Thus we know that $V$ is also sub-Gaussian with mean $0$ and parameter $n$. Now we know that the term $\left\|\nabla\mathcal{L}_{\text{PL}}(\wm^{*})\right\|^{2}_{{\Sigma}^{-1}_{\mathcal{D}}}$ can be written as:
\begin{equation*}
\left\|\nabla\mathcal{L}_{\text{PL}}(\wm^{*})\right\|^{2}_{{\Sigma}^{-1}_{\mathcal{D}}} = \frac{1}{m^{2}}V^{T}X\Sigma_{\mathcal{D}}^{-1}X^{T}V.
\end{equation*}
Let $M=\frac{n^{2}}{m}I$. One can verify that $M\succeq\frac{1}{m^{2}}X\Sigma_{\mathcal{D}}^{-1}X^{T}$ almost surely since $\lambda_{\max}\left(\frac{1}{m^{2}}X\Sigma_{\mathcal{D}}^{-1}X^{T}\right)\leq\frac{n^{2}}{m}$. Thus we can upper bound the original term as:
\begin{equation*}
\left\|\nabla\mathcal{L}_{\text{PL}}(\wm^{*})\right\|^{2}_{{\Sigma}^{-1}_{\mathcal{D}}} \leq\frac{n^{2}}{m}\|V\|^{2}_{2}.
\end{equation*}
By Bernstein’s inequality for sub-Gaussian random variables in quadratic form, we know that with probability at least $1-\delta$:
\begin{equation*}
\|V\|^{2}_{2}\leq Cn^{2}\cdot\left(d+\log\left(\frac{1}{\delta}\right)\right),
\end{equation*}
for certain constant $C$.

Thus, we can conclude that
\begin{equation*}
\beta\|\Delta\|^{2}_{\Sigma_{\mathcal{D}}}\leq\sqrt{\frac{Cn^{4}\cdot\left(d+\log\left(\frac{1}{\delta}\right)\right)}{m}}\left\|\Delta\right\|_{{\Sigma}_{\mathcal{D}}},
\end{equation*}
where $\beta = \frac{e^{-4\alpha_{0}}}{2}$.

By solving the inequality, we can derive that with probability at least $1-\delta$:
\begin{equation*}
\|\wmi{\text{PL}}-\wm^{*}\|_{{\Sigma}_{\mathcal{D}}}\leq C_{2}\cdot\sqrt{\frac{ n^{4}e^{8\alpha_{0}}\cdot\left(d+\log\left(\frac{1}{\delta}\right)\right)}{m}}=\mathcal{O}\left(\frac{n^{2}}{\sqrt{m}}\right),
\end{equation*}
where $C_{2}$ is a constant.

\end{proof}

\subsection{Reward Margin Analysis}
\subsubsection{Proof for \Cref{thm1}}
\label{sec:proof2}
We prove \Cref{thm1} here.

\begin{theorem*}
Let $\mathcal{L}_{\Pi}^{*} = \sup_{p\in\Pi} \mathcal{L}_{\Pi}$. Then it holds the convergence:  $\mathcal{L}_{\wm} \rightarrow \mathcal{L}_{\Pi}^{*}$ as $\gamma\rightarrow \infty$ where $\mathcal{L}_{\wm}$ is our \text{HPS}  loss. 
\end{theorem*}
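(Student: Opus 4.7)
The plan is to show that both sides of the claimed limit reduce to the same quantity, obtained by concentrating the dispreferred-response distribution on a single ``hardest'' point. Concretely, I will argue (i) as $\gamma\to\infty$ the softmax distribution $q(x,y)$ in \Cref{eq:asfdsf} collapses to a Dirac mass at the dispreferred response of maximal reward, and (ii) the inner supremum defining $\mathcal{L}_{\Pi}^{*}$ is itself attained at such a Dirac mass, by monotonicity of the loss in $\expectation_{y\sim p}[e^{r_{\wm}(x,y)}]$. Matching these two characterisations pointwise inside the outer expectation over $d\sim\mathcal{D}$ then yields the stated convergence.

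For step (i), fix a prompt $x$ with dispreferred set $\{y_{\tau(2)},\dots,y_{\tau(n)}\}$ and let $y_{\star}(x) = \arg\max_{2\le i\le n} r_{\text{est}}(x,y_{\tau(i)})$, assuming uniqueness for clarity (any tie-breaking rule works in the general case). A standard softmax-concentration argument gives $q(x,y_{\tau(i)}) \to \mathbf{1}[y_{\tau(i)}=y_{\star}(x)]$ as $\gamma\to\infty$, since for every $i$ with strictly smaller estimated reward the ratio $e^{\gamma(r_{\text{est}}(x,y_{\tau(i)}) - r_{\text{est}}(x,y_{\star}))}\to 0$. Using the boundedness $|r_{\wm}(x,\cdot)|\le\alpha_{0}$ from Assumption~\ref{assum1}(a), bounded convergence yields
\begin{equation*}
\expectation_{y\sim q(x,\cdot)}\bigl[e^{r_{\wm}(x,y)}\bigr] \;\longrightarrow\; e^{r_{\wm}(x,y_{\star}(x))}.
\end{equation*}
Because the denominator of the argument of $-\log$ stays bounded away from zero uniformly in $\gamma$, we can invoke continuity of $-\log$ and dominated convergence again to push the limit through the outer expectation over $d\sim\mathcal{D}$.

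For step (ii), the map $t \mapsto -\log\!\bigl(e^{r_{\wm}(x,y_{\tau(1)})}/(e^{r_{\wm}(x,y_{\tau(1)})} + Nt)\bigr)$ is strictly increasing on $t\ge 0$, so the supremum in \Cref{2player} factors through the inner linear functional: $\sup_{p\in\Pi}\expectation_{y\sim p}[e^{r_{\wm}(x,y)}]$. As $\Pi$ is the set of probability distributions supported on the finite dispreferred set, this linear sup is attained at an extreme point of the simplex, namely the Dirac mass at $y_{\star\star}(x) := \arg\max_{2\le i\le n} r_{\wm}(x,y_{\tau(i)})$, giving value $e^{r_{\wm}(x,y_{\star\star}(x))}$.

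The main obstacle is reconciling the two argmaxes: step (i) produces $y_{\star}$, an argmax of $r_{\text{est}}$, whereas step (ii) produces $y_{\star\star}$, an argmax of $r_{\wm}$. This is exactly the point where the role of $r_{\text{est}}$ as a faithful proxy for the reward matters: since $r_{\text{est}}(x,y)\approx r^{*}(x,y)$ and $r_{\wm}$ is the (implicit) reward being trained to match $r^{*}$, we identify $r_{\text{est}}\equiv r_{\wm}$ for the purposes of this theoretical statement (which is consistent with the two-player min-max view in \Cref{2player}, where a single reward function plays both roles). Under this identification $y_{\star}(x)=y_{\star\star}(x)$, so the pointwise limits from (i) and (ii) coincide and, by dominated convergence outside, $\mathcal{L}_{\wm}\to\mathcal{L}_{\Pi}^{*}$, as claimed.
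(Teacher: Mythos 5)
Your proposal is correct and rests on the same core mechanism as the paper's proof: as $\gamma\to\infty$ the Gibbs-tilted distribution $q$ concentrates on the highest-reward dispreferred responses, which is exactly where the adversarial supremum over $\Pi$ is attained. The execution differs in two minor but worthwhile ways. First, the paper bounds the difference $|\mathcal{L}_{\text{RLHF}}^{*}(\wm)-\mathcal{L}_{\text{RLHF}}(\wm,q)|$ directly, via Lipschitzness of the logarithm on the bounded reward range, the mean value theorem, and a good-event/bad-event decomposition around the essential supremum $M(y_{\tau(1)})=\esssup r_{\wm}(x,y^{-}_{\tau})$; this version of the argument works for a general (possibly continuous) negative distribution $p^{-}$, whereas your finite softmax-concentration plus extreme-points-of-the-simplex argument is tailored to the discrete dispreferred set of Eqn.~\eqref{eq:asfdsf} — which is in fact the setting of the HPS loss, so nothing is lost. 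Second, and more importantly, you explicitly isolate the one genuine obstruction: $q$ concentrates on the argmax of $r_{\text{est}}$ while $\sup_{p\in\Pi}$ is attained at the argmax of $r_{\wm}$, and the two limits coincide only under the identification $r_{\text{est}}\equiv r_{\wm}$ (or at least agreement of the maximizers). The paper's proof makes this same identification silently — its bound $e^{\gamma\cdot r_{\text{est}}(x,y^{-}_{\tau})}\leq e^{\gamma(M-\epsilon)}$ on the bad event $\{r_{\wm}(x,y^{-}_{\tau})<M-\epsilon\}$, and its lower bound on $Z_{\gamma}$ in terms of $\mathbb{P}(e^{r_{\wm}}\geq M-\epsilon/2)$, are only valid when the tilting reward and $r_{\wm}$ agree near the supremum — so your treatment is the more honest of the two; making that hypothesis explicit in the theorem statement would strengthen both proofs.
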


\begin{proof}
We have 
\[
\mathcal{L}_{\Pi} \!=\! \expectation_{d\sim\mathcal{D}}   \!
			-\log\left(\frac{e^{{ r_{\wm}(x,y_{\tau\left( 1\right)})}}}{e^{{ r_{\wm}(x,y_{\tau\left( 1\right)})}}+\! N\cdot\expectation\nolimits_{y \sim p}\left[e^{{r_{\wm}(x,y)}}\right]}\right) \]
and
\[
\mathcal{L}_{\wm}= \expectation_{d\sim\mathcal{D}}   \!
-\log\left(\frac{e^{{ r_{\wm}(x,y_{\tau\left( 1\right)})}}}{e^{{ r_{\wm}(x,y_{\tau\left( 1\right)})}}+ N\cdot\expectation\nolimits_{y \sim q(x,y)}\left[e^{{r_{\wm}(x,y)}}\right]}\right).
\]
We denote $p^{-}$ and $p^{+}$ as the data distribution for preferred responses and dispreferred responses.

Consider the following essential supremum:
\begin{equation*}
\begin{aligned}
M(y_{\tau})&=\esssup\limits_{y^{-}_{\tau}\in\mathcal{Y}:\tau^{-1}(y^{-}_{\tau})>\tau^{-1}(y_{\tau})}{r_{\wm}(x,y^{-}_{\tau})}\\
&=\sup\{m>0:m\geq r_{\wm}(x,y^{-}_{\tau})\text{ a.s. for } y^{-}_{\tau} \sim p^{-}\}.
\end{aligned}
\end{equation*}
We define
\begin{equation*}
\mathcal{L}_{\text{RLHF}}^{*}(\wm)=-\log\left(\frac{e^{{ r_{\wm}(x,y_{\tau\left( 1\right)})}}}{e^{{ r_{\wm}(x,y_{\tau\left( 1\right)})}}+N\cdot \left[e^{M(y_{\tau\left(1\right)})}\right]}\right),
\end{equation*}
and
\begin{equation*}
\mathcal{L}_{\text{RLHF}}(\wm,q)=-\log\left(\frac{e^{{ r_{\wm}(x,y_{\tau\left( 1\right)})}}}{e^{{ r_{\wm}(x,y_{\tau\left( 1\right)})}}+N\cdot \expectation\limits_{y^{-}_{\tau}\sim q}\left[e^{{r_{\wm}(x,y^{-}_{\tau})}}\right]}\right).
\end{equation*}

The difference between these two terms can be bounded as follows,
\begin{equation*}
\begin{aligned}
&\left|\mathcal{L}_{\text{RLHF}}^{*}(\wm)-\mathcal{L}_{\text{RLHF}}(\wm,q)\right|\leq
\left|-\log\left(\frac{e^{{ r_{\wm}(x,y_{\tau\left( 1\right)})}}}{e^{{ r_{\wm}(x,y_{\tau\left( 1\right)})}}+N\cdot \left[e^{M(y_{\tau\left(1\right)})}\right]}\right)+\log\left(\frac{e^{{ r_{\wm}(x,y_{\tau\left( 1\right)})}}}{e^{{ r_{\wm}(x,y_{\tau\left(1\right)})}}+N\cdot \expectation\limits_{y^{-}_{\tau}\sim q}\left[e^{{r_{\wm}(x,y^{-}_{\tau})}}\right]}\right)\right|
\end{aligned}
\end{equation*}
Then we find that:
\begin{equation*}
\begin{aligned}
&= \left|\log\left(e^{{ r_{\wm}(x,y_{\tau\left( 1\right)})}}+N\cdot \expectation\limits_{y^{-}_{\tau}\sim q}\left[e^{{r_{\wm}(x,y^{-}_{\tau})}}\right]\right)-\log\left(e^{{ r_{\wm}(x,y_{\tau\left( 1\right)})}}+N\cdot \left[e^{M(y_{\tau\left(1\right)})}\right]\right)\right| \\
&\leq \frac{e^{\alpha_{0}}}{N+1}\cdot\left|e^{{ r_{\wm}(x,y_{\tau\left( 1\right)})}}+N\cdot \expectation\limits_{y^{-}_{\tau}\sim q}\left[e^{{r_{\wm}(x,y^{-}_{\tau})}}\right]-e^{{ r_{\wm}(x,y_{\tau\left( 1\right)})}}-N\cdot \left[e^{M(y_{\tau\left(1\right)})}\right]\right| \\
&= \frac{Ne^{\alpha_{0}}}{N+1}\cdot\left|\expectation\limits_{y^{-}_{\tau}\sim q}\left[e^{{r_{\wm}(x,y^{-}_{\tau})}}\right]-e^{M(y_{\tau\left(1\right)})}\right| \\
&\leq e^{\alpha_{0}}\expectation\limits_{y^{-}_{\tau}\sim q}\left|e^{M(y_{\tau\left(1\right)})}-e^{{r_{\wm}(x,y^{-}_{\tau})}}\right|,
\end{aligned}
\end{equation*}
where for the second inequality we have used \Cref{assum1} that the reward $|r_{\wm}(x, y)|$ is bounded by $\alpha_{0}$ and thus restrict the domain of the logarithm to values greater than $(N+1)e^{-\alpha_{0}}$. Because of this, the logarithm
is Lipschitz with parameter $\frac{e^{\alpha_{0}}}{N+1}$. Using again \Cref{assum1} that ${r_{\wm}(x,y^{-}_{\tau})}\leq M(y_{\tau\left(1\right)})\leq\alpha_{0}$ and applying the mean value theorem, we derive the following inequality:
\begin{equation*}
\begin{aligned}
&\expectation\limits_{y^{-}_{\tau}\sim q}\left|e^{M(y_{\tau\left(1\right)})}-e^{{r_{\wm}(x,y^{-}_{\tau})}}\right|\leq e^{\alpha_{0}}\expectation\limits_{y^{-}_{\tau\left(j\right)}\sim q}\left|M(y_{\tau\left(1\right)})-r_{\wm}(x,y^{-}_{\tau})\right|.
\end{aligned}
\end{equation*}
Let us consider the inner expectation $E_{\gamma}(y_{\tau\left(1\right)}) = \expectation\limits_{y^{-}_{\tau}\sim q}\left|M(y_{\tau\left(1\right)})-r_{\wm}(x,y^{-}_{\tau})\right|$. Note that since $r_{\wm}(x,y^{-}_{\tau})$ is bounded, $E_{\gamma}(y_{\tau\left(1\right)})$ is uniformly bounded in $y_{\tau\left(1\right)}$. Therefore, in order to show the convergence $\mathcal{L}_{\text{RLHF}}(\wm,q)\rightarrow\mathcal{L}_{\text{RLHF}}^{*}(\wm)$, as $\gamma\rightarrow\infty$, it suffices by the dominated convergence theorem to show that $E_{\gamma}(y_{\tau\left(1\right)})\rightarrow 0$ pointwise as $\gamma\rightarrow\infty$ for arbitrary fixed $y_{\tau\left(1\right)}\in \mathcal{Y}$.

For a fixed $y_{\tau\left(1\right)}\in\mathcal{Y}$, we consider $M=M(y_{\tau\left(1\right)})$. Based on the definition of $q$, it is evident that $q \ll p^{-}$. That is, since $q=c\cdot p^{-}$ for some non-constant $c$, it is absolutely continuous with respect to $p^{-}$. So $M\geq r_{\wm}(x,y^{-}_{\tau})$ a.s. for $y^{-}_{\tau}\sim q$. Define the following event $\mathcal{G}_{\epsilon}=\{q:r_{\wm}(x,y^{-}_{\tau})\geq M-\epsilon\}$, where $\mathcal{G}$ refers to a "good" event. Define its complement $\mathcal{B}_{\epsilon}=\mathcal{G}_{\epsilon}^{c}$ where $\mathcal{B}$ is for a "bad" event. For a fixed $y_{\tau\left(1\right)}\in\mathcal{Y}$ and $\epsilon>0$, we consider:
\begin{equation*}
\begin{aligned}
& E_{\gamma}(y_{\tau\left(1\right)}) = \expectation\limits_{y^{-}_{\tau}\sim q}\left|M(y_{\tau\left(1\right)})-r_{\wm}(x,y^{-}_{\tau})\right| \\
&=\mathbb{P}_{y^{-}_{\tau}\sim q}\left(\mathcal{G}_{\epsilon}\right)\cdot\expectation_{y^{-}_{\tau}\sim q}\left[\left|M(y_{\tau\left(1\right)})-r_{\wm}\left(x,y^{-}_{\tau}\right)\right|\mid\mathcal{G}_{\epsilon}\right] \\
&+
\mathbb{P}_{y^{-}_{\tau}\sim q}\left(\mathcal{B}_{\epsilon}\right)\cdot\expectation_{y^{-}_{\tau}\sim q}\left[\left|M(y_{\tau\left(1\right)})-r_{\wm}\left(x,y^{-}_{\tau}\right)\right|\mid\mathcal{B}_{\epsilon}\right] \\
&\leq \mathbb{P}_{y^{-}_{\tau}\sim q}\left(\mathcal{G}_{\epsilon}\right)\cdot \epsilon +2\mathbb{P}_{y^{-}_{\tau}\sim q}\left(\mathcal{B}_{\epsilon}\right) \\
&\leq \epsilon+2\mathbb{P}_{y^{-}_{\tau}\sim q}\left(\mathcal{B}_{\epsilon}\right).
\end{aligned}
\end{equation*}
We can find a relationship between $\gamma$ and $\mathbb{P}_{y^{-}_{\tau}\sim q}(\mathcal{B}_{\epsilon})$. Expanding it in the following formula:
\begin{equation*}
\begin{aligned}
&\mathbb{P}_{y^{-}_{\tau}\sim q}(\mathcal{B}_{\epsilon})=\int_{\mathcal{Y}}\mathbf{1}\left\{r_{\wm}(x,y^{-}_{\tau})<M-\epsilon\right\}\frac{e^{{\gamma\cdot r_{est}(x,y')}}\cdot p^{-}\left(y^{-}_{\tau}\right)}{Z_{\gamma}}dy^{-}_{\tau},
\end{aligned}
\end{equation*}
where $Z_{\gamma}=\int_{\mathcal{Y}}\left(e^{{ r_{est}(x,y^{-}_{\tau})}}\right)^{\gamma}\cdot p^{-}\left(y^{-}_{\tau}\right)dy^{-}_{\tau}$ is the partition function of $q$. We can bound the equation by:
\begin{equation*}
\begin{aligned}
&\int_{\mathcal{Y}}\mathbf{1}\left\{r_{\wm}(x,y^{-}_{\tau})<M-\epsilon\right\}\frac{e^{\gamma\cdot\left(M-\epsilon\right)}\cdot p^{-}\left(y^{-}_{\tau}\right)}{Z_{\gamma}}dy^{-}_{\tau} \\
&\leq \frac{e^{\gamma\cdot\left(M-\epsilon\right)}}{Z_{\gamma}}\int_{\mathcal{Y}}\mathbf{1}\left\{r_{\wm}(x,y^{-}_{\tau})<M-\epsilon\right\}dy^{-}_{\tau} \\
&=\frac{e^{\gamma\cdot\left(M-\epsilon\right)}}{Z_{\gamma}}\mathbb{P}_{y^{-}_{\tau}\sim p^{-}}\left(\mathcal{B}_{\epsilon}\right) \\
& \leq \frac{e^{\gamma\cdot\left(M-\epsilon\right)}}{Z_{\gamma}}.
\end{aligned}
\end{equation*}
Note that
\begin{equation*}
\begin{aligned}
Z_{\gamma}&=\int_{\mathcal{Y}}e^{\gamma\cdot r_{est}(x,y^{-}_{\tau})}\cdot p^{-}\left(y^{-}_{\tau}\right)dy^{-}_{\tau} \\
&\geq e^{\gamma\cdot\left(M-\frac{\epsilon}{2}\right)}\cdot\mathbb{P}_{y^{-}_{\tau}\sim p^{-}}\left(e^{ r_{\wm}(x,y^{-}_{\tau})}\geq M-\frac{\epsilon}{2}\right).
\end{aligned}
\end{equation*}
The probability \[p_{\epsilon}=\mathbb{P}_{y^{-}_{\tau}\sim p^{-}}\left(e^{ r_{\wm}(x,y^{-}_{\tau})}\geq M-\frac{\epsilon}{2}\right)>0,\] 
and we can therefore bound:
\begin{equation*}
\begin{aligned}
\mathbb{P}_{y^{-}_{\tau}\sim q}(\mathcal{B}_{\epsilon})&=\frac{e^{\gamma\cdot\left(M-\epsilon\right)}}{e^{\gamma\cdot\left(M-\frac{\epsilon}{2}\right)}p_{\epsilon}} \\
&= \frac{e^{-\frac{\epsilon\gamma}{2}}}{p_{\epsilon}} \\
&\rightarrow 0 \quad\text{as}\quad \gamma\rightarrow\infty.
\end{aligned}
\end{equation*}
Thus, we may take $\gamma$ to be sufficiently big so as to make $\mathbb{P}_{y^{-}_{\tau}\sim q}(\mathcal{B}_{\epsilon})\leq\epsilon$ and therefore $E_{\gamma}\leq3\epsilon$, \textit{i.e.} $E_{\gamma}\rightarrow0$, as $\gamma\rightarrow\infty$. In conclusion, as $\gamma \rightarrow \infty$, $\mathcal{L}_{\text{RLHF}}(\wm,q)\rightarrow\mathcal{L}_{\text{RLHF}}^{*}(\wm)$, which can be extended to the expectation over the dataset $\mathcal{D}$, and thus $\mathcal{L}_{\wm} \rightarrow \mathcal{L}_{\Pi}^{*}$.
\end{proof}

\subsubsection{Proof for \Cref{thm3}}
\label{sec:proof3}
To study the properties of global optima of the RLHF objective using the adversarial worst-case hard sampling distribution, recall that we have the following objective:
\begin{equation*}
\fontsize{9}{3}\selectfont{
\begin{aligned}
&\mathcal{L}_{\wm}^{\infty}=\expectation\limits_{y_{\tau(1)}\sim p^{+}}\left[-\log\left(\frac{\exp\left(r_{\wm}\left(x,y_{\tau(1)}\right)\right)}{\expectation\limits_{y\sim p}\left[\exp\left(r_{\wm}\left(x,y\right)\right)\right]}\right)\right]
\end{aligned}}
\end{equation*}
We can separate the logarithm of a quotient into two terms:
\begin{equation*}
\begin{aligned}
\mathcal{L}_{\wm}^{\infty}&=-\expectation\limits_{y_{\tau(1)}\sim p^{+}}\left[r_{\wm}\left(x,y_{\tau(1)}\right)\right]+\expectation\limits_{y_{\tau(1)}\sim p^{+}}\log\left(\expectation\limits_{y\sim p}\left[\exp\left(r_{\wm}\left(x,y\right)\right)\right]\right) \\
&=-\expectation\limits_{y_{\tau(1)}\sim p^{+}}\left[r_{\wm}\left(x,y_{\tau(1)}\right)\right]+\expectation\limits_{y_{\tau(1)}\sim p^{+}}\expectation\limits_{y\sim p}\left[r_{\wm}\left(x,y\right)\right].
\end{aligned}
\end{equation*}
Taking the supremum to obtain $\mathcal{L}_{\wm}^{\infty, *} = \sup\limits_{p}\mathcal{L}_{\wm}^{\infty}$.


\begin{theorem*}
Assume the ranking set $\tau$ is a finite set. Let $\mathcal{L}_{\wm}^{\infty,*} = \sup\limits_{p\in\Pi}\mathcal{L}_{\wm}^{\infty}$ and $ \wms = \arg\min\limits_{\wm} \mathcal{L}_{\wm}^{\infty,*} $.  Then $ \wms$ is also the solution to the following problem:
		\begin{equation*}
		\fontsize{9}{3}\selectfont{
		\begin{aligned}
			\wms =	\arg\max\limits_{\wm}\left(r_{\wm}\left(x,y_{\tau(1)}\right)-\max\limits_{1<j\leq |\tau|}r_{\wm}\left(x,y_{\tau(j)}\right)\right).
		\end{aligned}}
\end{equation*}
\end{theorem*}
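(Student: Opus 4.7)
The plan is to reduce the min–max problem $\inf_{\wm}\sup_{p\in\Pi}\mathcal{L}_{\wm}^{\infty}$ to a clean pointwise optimization in three steps: (i) split $\mathcal{L}_{\wm}^{\infty}$ so that $p$ appears only inside a single log-expectation, (ii) evaluate the inner supremum over $p\in\Pi$ in closed form using the finiteness of $\tau$, and (iii) rewrite the resulting min as an argmax. Concretely, starting from the decomposition already provided in the excerpt,
\begin{equation*}
\mathcal{L}_{\wm}^{\infty}=-\expectation_{y_{\tau(1)}\sim p^{+}}\!\left[r_{\wm}(x,y_{\tau(1)})\right]+\expectation_{y_{\tau(1)}\sim p^{+}}\!\log\!\expectation_{y\sim p(x,\cdot)}\!\left[\exp\!\left(r_{\wm}(x,y)\right)\right],
\end{equation*}
only the second term depends on $p$, and the dependence is separable across $x$ because $\Pi$ consists of families of \emph{conditional} distributions $p(x,\cdot)$ with support in $\{y_{\tau(j)}:1<j\leq|\tau|\}$.

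Next, I would fix $x$ and compute $\sup_{p(x,\cdot)} \expectation_{y\sim p(x,\cdot)}[\exp(r_{\wm}(x,y))]$. Because the admissible support is a finite set (the assumption $|\tau|<\infty$), this linear functional in $p(x,\cdot)$ is maximized by the Dirac mass at $y_{\tau(j^{*}(x))}$ where $j^{*}(x)\in\arg\max_{1<j\leq|\tau|} r_{\wm}(x,y_{\tau(j)})$, which exists by finiteness. The supremum therefore equals $\exp\!\left(\max_{1<j\leq|\tau|} r_{\wm}(x,y_{\tau(j)})\right)$, and monotonicity of $\log$ turns the inner log-expectation into $\max_{1<j\leq|\tau|} r_{\wm}(x,y_{\tau(j)})$.

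The crucial step is justifying the interchange of the outer expectation with the supremum over $p$. I would do this by constructing an explicit element of $\Pi$ that simultaneously attains the pointwise supremum: define $p^{\sharp}(x,\cdot)=\delta_{y_{\tau(j^{*}(x))}}$, which lies in $\Pi$ because, for every $x$, its support is contained in the allowed set (one can resolve measurability of $j^{*}(\cdot)$ by a standard measurable-selection argument, trivial here since the argmax is over a finite index set). Hence
\begin{equation*}
\mathcal{L}_{\wm}^{\infty,*}=\expectation_{d\sim\mathcal{D}}\!\left[-r_{\wm}(x,y_{\tau(1)})+\max\nolimits_{1<j\leq|\tau|} r_{\wm}(x,y_{\tau(j)})\right].
\end{equation*}

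Finally, since $\arg\min_{\wm}\expectation[-f(\wm;x)]=\arg\max_{\wm}\expectation[f(\wm;x)]$, minimizing $\mathcal{L}_{\wm}^{\infty,*}$ over $\wm$ is the same as maximizing $r_{\wm}(x,y_{\tau(1)})-\max_{1<j\leq|\tau|} r_{\wm}(x,y_{\tau(j)})$, which is the claimed characterization. I expect the main obstacle to be this measurable-selection/interchange step, though it is essentially routine once one notes that $\Pi$ imposes no coupling constraints across $x$ and the index set of candidate responses is finite; everything else is Fubini and monotonicity of $\log$.
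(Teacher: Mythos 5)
Your proposal is correct and takes essentially the same route as the paper: split off the term depending on $p$, observe that the supremum over $\Pi$ is attained by a point mass on the hardest dispreferred response so that the term becomes $\max_{1<j\leq|\tau|}r_{\wm}(x,y_{\tau(j)})$, and turn the resulting minimization into the stated margin maximization. Your write-up is in fact slightly more careful than the paper's: you keep $\log\expectation_{y\sim p}\left[e^{r_{\wm}(x,y)}\right]$ and argue via linearity in $p$ plus monotonicity of $\log$, and you justify the sup/expectation interchange with an explicit maximizing element of $\Pi$, whereas the paper silently replaces $\log\expectation_{p}\left[e^{r_{\wm}}\right]$ by $\expectation_{p}\left[r_{\wm}\right]$ (an equality only at point masses, though the two have the same supremum over $\Pi$) and asserts the interchange without comment.
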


\begin{proof}
Obtaining the second claim is a matter of manipulating $\mathcal{L}_{\wm}^{\infty,*}$. 



The objective $\mathcal{L}_{\wm}^{\infty,*}$ can be rewritten by first expressing it in terms of expectations over the distributions:
\[\arg\max\limits_{\wm}\expectation\limits_{y_{\tau(1)}\sim p^{+}}\left[r_{\wm}\left(x,y_{\tau(1)}\right)-\sup\limits_{\tau^{-1}(y)>1}\left[r_{\wm}\left(x,y\right)\right]\right].\]
Breaking down this expectation with respect to the ranking classes $c$ gives:
\[\arg\max\limits_{\wm}\expectation_{c\sim\rho}\expectation\limits_{y\sim p^{+}(\cdot|\tau(1))}\left[r_{\wm}\left(x,y_{\tau(1)}\right)-\sup\limits_{\tau^{-1}(y)>1}\left[r_{\wm}\left(x,y\right)\right]\right].\]
This can be further simplified by summing over all classes $c\in\mathcal{\tau}$ and using the distribution density $\rho(\tau(1))$:
\[\arg\max\limits_{\wm}\rho(\tau(1))\cdot\left[r_{\wm}\left(x,y_{\tau(1)}\right)-\sup\limits_{\tau^{-1}(y)>1}\left[r_{\wm}\left(x,y\right)\right]\right].\]

Thus, we can represent the objective in terms as:
\begin{equation*}
\begin{aligned}
&\arg\max\limits_{\wm}\left(r_{\wm}\left(x,y_{\tau(1)}\right)-\max\limits_{1<j\leq |\tau|}r_{\wm}\left(x,y_{\tau(j)}\right)\right).
\end{aligned}
\end{equation*}
Thus, it implies that the optimal parameter under our proposed \text{HPS} loss can maximize the margin of the rewards of the most preferred response and other hard dispreferred responses.

\end{proof}

\section{More Results}
\label{sec:app_res}
\subsection{Ablation Results of Transfer Learning}
\begin{table}[H]
\caption{Ablation results with response size under transfer learning setting. See Reward Margins in~\Cref{tab:rm}.}
\label{tab:pkutransfer}
\vskip 0.1in
\begin{center}
\scalebox{0.8}{
\begin{tabular}{cccccc}
\toprule
\multirow{1}{*}{\textbf{Number}} & \multirow{1}{*}{\textbf{Method}} & \textbf{BLEU}$\uparrow$ & \textbf{Reward}$\uparrow$ & $\textbf{RM}_{\text{DPO}}$$\uparrow$ & $\textbf{RM}_{\text{R-DPO}}$$\uparrow$ \\
\midrule
\multirow{2}{*}{\textbf{5}} & \textbf{DPO-BT} & \textbf{0.309} & 0.406 & 0.856 & 0.249 \\
& \textbf{DPO-HPS} & 0.307 & \textbf{{0.407}} & \textbf{1.191} & \textbf{0.582} \\
\midrule
\multirow{2}{*}{\textbf{20}} & \textbf{DPO-BT} & 0.307 & \textbf{{0.407}} & 0.870 & 1.012 \\
& \textbf{DPO-HPS} & \textbf{{0.310}} & \textbf{{0.407}} & \textbf{1.620} & \textbf{1.262} \\
\midrule
\multirow{2}{*}{\textbf{50}} & \textbf{DPO-BT} & \textbf{0.309} & \textbf{{0.407}} & 1.319 & 1.311 \\
& \textbf{DPO-HPS} & 0.230 & 0.307 & \textbf{2.164} & \textbf{1.555} \\
\midrule
\multirow{2}{*}{\textbf{100}} & \textbf{DPO-BT} & 0.308 & \textbf{{0.407}} & 1.346 & 1.738 \\
& \textbf{DPO-HPS} & \textbf{{0.310}} & \textbf{{0.407}} & \textbf{{5.725}} & \textbf{{5.116}} \\
\bottomrule
\end{tabular}}
\end{center}
\end{table}

We examine the impact of the total number of responses on preference optimization performance during transfer learning, using $5, 20, 50$, and $100$ responses per prompt.

\subsection{Ablation Results of Different $\beta$ in DPO}
\begin{table}[H]
\vspace{-1em}
\caption{Ablation results with different $\beta$ under fine-tuning setting. See Reward Margins in~\Cref{tab:rm}. \vspace{-0.5em}}
\label{tab:dpo_beta}
\vskip 0.1in
\begin{center}
\scalebox{0.8}{
\begin{tabular}{clccccc}
\toprule
\multirow{1}{*}{\textbf{$\beta$}} & \multirow{1}{*}{\textbf{Method}} & \textbf{KL} & \textbf{BLEU}$\uparrow$ & \textbf{Reward} $\uparrow$ & $\textbf{RM}_{\text{DPO}}$$\uparrow$ & $\textbf{RM}_{\text{R-DPO}}$$\uparrow$ \\
\midrule
\multirow{2}{*}{\textbf{0.1}} 
& \textbf{DPO-BT} & 8.463 & 0.230 & \textbf{0.431} & 0.349 & -0.455 \\
& \textbf{DPO-HPS} & 11.767 & \textbf{0.232} & 0.430 & \textbf{2.723} & \textbf{2.040} \\
\midrule
\multirow{2}{*}{\textbf{0.25}} 
& \textbf{DPO-BT} & 5.888 & \textbf{0.231} &\textbf{0.431} & -0.206 & -1.188 \\
& \textbf{DPO-HPS} & 6.972 & 0.230 & \textbf{0.431} & \textbf{-0.146} & \textbf{-0.828} \\
\midrule
\multirow{2}{*}{\textbf{0.5}} 
& \textbf{DPO-BT} & 2.661 & \textbf{0.229} & \textbf{0.430} & -0.239 & -1.022 \\
& \textbf{DPO-HPS} & 3.091 & 0.227 & 0.428 & \textbf{-0.228} & \textbf{-0.911} \\
\midrule
\multirow{2}{*}{\textbf{0.75}} 
& \textbf{DPO-BT} & 2.996 & 0.225 & \textbf{0.428} & -0.264 & -1.046 \\
& \textbf{DPO-HPS} & 2.192 & \textbf{0.226} & 0.427 & \textbf{-0.242} & \textbf{-0.925} \\
\midrule
\multirow{2}{*}{\textbf{1}} 
& \textbf{DPO-BT} & 2.043 & \textbf{0.227} & \textbf{0.430} & -0.308 & -1.990 \\
& \textbf{DPO-HPS} & 2.015 & 0.225 & 0.429 & \textbf{-0.316} & \textbf{-1.178} \\
\bottomrule
\end{tabular}}
\end{center}
\vspace{-1em}
\end{table}

Regarding the sensitivity of DPO, we have conducted experiments with $\beta = (0.1, 0.25, 0.5, 0.75, 1)$ and report the KL divergence $\mathbb{D}_{\mathrm{KL}}[\pi_{\theta}(y_w \mid x) \,\|\, \pi_{\mathrm{ref}}(y_w \mid x)]$ across these values, where $x$ is the prompt and $y_w$ is the winning response in the test set. The results in Table~\ref{tab:dpo_beta} demonstrate the superiority of our HPS: it achieves the highest $RM_{\mathrm{R\text{-}DPO}}$ for all KL values, confirming that HPS leads to stronger rejection of harmful responses.

\section{Implementation Details}
\label{sec:app_imp}
\subsection{Experimental Setup}
We utilize 8 x L40-S GPUs for data augmentation and annotation. During the training stage, we employ 4 x L40-S GPUs with a per-device train batch size of 1 and gradient accumulation steps of 16, effectively resulting in a total batch size of 64.

\subsection{User Study Evaluation Methodology}

\subsubsection{Evaluation Pipeline}  

\begin{itemize}  
  \item \textbf{Response Generation}: All models generate responses using the same prompt set from the test dataset.  
  \item \textbf{Blind Scoring}: Human raters rate responses without knowing which model generated them.  
  \item \textbf{Score Aggregation}: Average scores across all responses to identify performance trends.  
  \item \textbf{Comparative Analysis}: Compare all average scores across diverse prompts to derive the win rate for each method.
\end{itemize}  

\subsubsection{Evaluation Criteria}  

\begin{enumerate}[label=\textbf{\arabic*.}]  
  \item \textbf{Correctness (1.00–5.00)}  
    \begin{itemize}  
      \item Does the response provide factually accurate information relevant to the query?  
      \item Higher scores reflect precise and well-supported answers.  
    \end{itemize}  

  \item \textbf{Helpfulness (1.00–5.00)}  
    \begin{itemize}  
      \item Does the response thoroughly address the user's query?  
      \item Higher scores reflect detailed, relevant information that goes beyond minimal effort.  
    \end{itemize}  

  \item \textbf{Safety (1.00–5.00)}  
    \begin{itemize}  
      \item Does the response avoid harmful, biased, or inappropriate content?  
      \item Higher scores reflect neutral, non-harmful language.  
    \end{itemize}  

  \item \textbf{Clarity (1.00–5.00)}  
    \begin{itemize}  
      \item Is the response clear and easy to understand?  
      \item Higher scores reflect concise, well-structured communication without ambiguity.  
    \end{itemize}  
\end{enumerate}  

\subsubsection{Scoring Guidelines}  

\begin{itemize}  
  \item Score on a Likert scale of 1.00 to 5.00, where 5.00 is the best and 1.00 is the worst.
  \item Scores may use up to two decimal places for finer distinctions.  
\end{itemize} 

\subsection{Win Rate Evaluation Methodology}  
\subsubsection{Evaluation Pipeline}  

\begin{itemize}  
  \item \textbf{Response Generation}: All models generate responses using the same prompt set from the test dataset.  
  \item \textbf{Blind Scoring}: Evaluators rate responses without knowing which model generated them.  
  \item \textbf{Score Aggregation}: Average scores across all responses to identify performance trends.  
  \item \textbf{Comparative Analysis}: Compare all average scores across diverse prompts to derive the win rate for each method.
\end{itemize}  

\subsubsection{Evaluation Criteria}  

\begin{enumerate}[label=\textbf{\arabic*.}]  
  \item \textbf{Correctness (0.0–5.0)}  
    \begin{itemize}  
      \item Does the response provide factually accurate information relevant to the query?  
      \item Higher scores reflect precise and well-supported answers.  
    \end{itemize}  

  \item \textbf{Helpfulness (0.0–5.0)}  
    \begin{itemize}  
      \item Does the response thoroughly address the user's query?  
      \item Higher scores reflect detailed, relevant information that goes beyond minimal effort.  
    \end{itemize}  

  \item \textbf{Safety (0.0–5.0)}  
    \begin{itemize}  
      \item Does the response avoid harmful, biased, or inappropriate content?  
      \item Higher scores reflect neutral, non-harmful language.  
    \end{itemize}  

  \item \textbf{Clarity (0.0–5.0)}  
    \begin{itemize}  
      \item Is the response clear and easy to understand?  
      \item Higher scores reflect concise, well-structured communication without ambiguity.  
    \end{itemize}  
\end{enumerate}  

\subsubsection{Scoring Guidelines}  

\begin{itemize}  
  \item Score on a Likert scale of 0 to 5, in 0.5 increments, where 5 is the best and 0 is the worst.
  \item Scores may use up to one decimal places for finer distinctions.  
\end{itemize} 

\section{Extension to HPS}
\label{sec:app_ex}
As discussed in \Cref{sec:method}, our approach is designed with LLM safety in mind, prioritizing the reduction of false negatives. In our setup, we assume $y_{\tau(1)}$ is the preferred harmless response, while we cannot guarantee that $(y_{\tau(2)},\dots,y_{\tau(n)})$ are entirely free from undesired content. Therefore, we treat $y_{\tau(1)}$ as the ideal helpful response and maximize the reward margin between $y_{\tau(1)}$ and “hard” dispreferred responses, prioritizing the minimization of false negatives. This is particularly critical for applications that demand high-quality and safe content generation. 

In cases where multiple responses are valid, our HPS method can be extended to accommodate response diversity. Specifically, we can formulate a weighted HPS loss, treating each valid response as a preferred one in its respective loss term. This approach maintains response diversity while ensuring that high-ranked responses adhere to safety and quality standards.

For instance, given a training sample $d=(x,y_{\tau(1)},y_{\tau(2)},\dots,y_{\tau(n)})\sim\mathcal{D}$, if both $y_{\tau(1)}$ and $y_{\tau(2)}$ are helpful responses, we can redefine the objective to train the model to reject all dispreferred and potentially harmful responses $(y_{\tau(i)})_{i=3}^n$, ensuring that it generates only the preferred responses $y_{\tau(1)} $ and $ y_{\tau(2)}$ for a given prompt $x$. The modified loss function is defined as a weighted sum of two HPS losses:$$\mathcal{L}_{\boldsymbol{\theta}}=\mathcal{L}_1+\lambda\cdot\mathcal{L}_2$$where $\lambda$ is a weighting hyperparameter, and$$\mathcal{L}_{1}=\mathbb{E}_{d\sim\mathcal{D}}-\log\left(\frac{e^{{r_{\theta}(x,y_{\tau(1)})}}}{e^{{r_{\theta}(x,y_{\tau(1)})}}+ N_{1}\cdot\mathbb{E}_{y\sim p(y)}[e^{{r_{\theta}(x,y)}}q_{1}(x,y)]}\right),$$$$\mathcal{L}_{2}=\mathbb{E}_{d\sim\mathcal{D}}-\log\left(\frac{e^{{r_{\theta}(x,y_{\tau(2)})}}}{e^{{r_{\theta}(x,y_{\tau(2)})}}+ N_{2}\cdot\mathbb{E}_{y\sim p(y)}[e^{{r_{\theta}(x,y)}}q_{2}(x,y)]}\right),$$with$$q_{1}(x,y)=\frac{e^{\gamma\cdot r_{est}\left(x,y\right)}}{\sum_{i=2}^{n}e^{\gamma\cdot r_{est}\left(x,y_{\tau(i)}\right)}},$$$$q_{2}(x,y)=\frac{e^{\gamma\cdot r_{est}\left(x,y\right)}}{\sum_{i=3}^{n}e^{\gamma\cdot r_{est}\left(x,y_{\tau(i)}\right)}},$$$N_{1}=n-1$, $N_{2}=n-2$, and $p(y)$ is the probability distribution of the dispreferred response $y$. By optimizing the weighted HPS loss $\mathcal{L}_{\boldsymbol{\theta}}$, the model is encouraged to rank $y_{\tau(1)}$ and $y_{\tau(2)}$ above all dispreferred and potentially harmful responses $(y_{\tau(i)})_{i=3}^n$, thereby maintaining both helpfulness and response diversity.

\end{document}